\newtheorem{bound}{Bound}
\begin{document}
\title{Robust Domain Adaptation: \\ Representations, Weights and Inductive Bias}
%
%
%

\author{Victor Bouvier\inst{1, 2}(\Letter) \and
        Philippe Very\thanks{Work done when author was at Sidetrade} \inst{3} \and
        Cl\'ement Chastagnol$^\star$ \inst{4} \and 
        Myriam Tami\inst{1} \and 
        C\'eline Hudelot\inst{1}}

\authorrunning{V. Bouvier et al.}
           
\institute{Université Paris-Saclay, CentraleSupélec, Mathématiques et Informatique pour la Complexité et les Systèmes, 91190, Gif-sur-Yvette, France \email{firstname.name@centralesupelec.fr}\and
           Sidetrade, 114 Rue Gallieni, 92100, Boulogne-Billancourt, France, \email{vbouvier@sidetrade.com}\and 
           Lend-Rx, 24 Rue Saint Dominique, 75007, Paris, France, \\ \email{philippe.very@lend-rxtech.com}  \and 
           Alan, 117 Quai de Valmy, 75010 Paris, France, \\
           \email{clement.chastagnol@alan.eu}}

%
\maketitle              

\begin{abstract}
Unsupervised Domain Adaptation (UDA) has attracted a lot of attention in the last ten years. The emergence of Domain Invariant Representations (IR) has improved drastically the transferability of representations from a labelled source domain to a new and unlabelled target domain. However, a potential pitfall of this approach, namely the presence of \textit{label shift}, has been brought to light. Some works address this issue with a relaxed version of domain invariance obtained by weighting samples, a strategy often referred to as Importance Sampling. From our point of view, the theoretical aspects of how Importance Sampling and Invariant Representations interact in UDA have not been studied in depth. In the present work, we present a bound of the target risk which incorporates both weights and invariant representations. Our theoretical analysis highlights the role of inductive bias in aligning distributions across domains. We illustrate it on standard benchmarks by proposing a new learning procedure for UDA. We observed empirically that weak inductive bias makes adaptation more robust. The elaboration of stronger inductive bias is a promising direction for new UDA algorithms.
\end{abstract}
\keywords{Unsupervised Domain Adaptation  \and Importance Sampling \and Invariant Representations \and Inductive Bias}

\section{Introduction}
Deploying machine learning models in the real world often requires the ability to generalize to \textit{unseen samples} \textit{i.e.} samples significantly different from those seen during learning. Despite impressive performances on a variety of tasks, deep learning models do not always meet these requirements \cite{beery2018recognition,geva2019we}. For this reason, \textit{out-of-distribution generalization} is recognized as a major challenge for the reliability of machine learning systems \cite{amodei2016concrete,arjovsky2019invariant}.  Domain Adaptation (DA) \cite{quionero2009dataset,pan2009survey} is a well-studied approach to bridge the gap between train and test distributions. In DA, we refer to train and test distributions as \textit{source} and \textit{target} respectively noted $p_S(x,y)$ and $p_T(x,y)$ where $x$ are inputs and $y$ are labels. The objective of DA can be defined as learning a good classifier on a poorly sampled target domain by leveraging samples from a source domain. Unsupervised Domain Adaptation (UDA) assumes that only unlabelled data from the target domain is available during training. In this context, a natural assumption, named \textit{Covariate shift} \cite{shimodaira2000improving,huang2007correcting}, consists in assuming that the mapping from the inputs to the labels is conserved across domains, \textit{i.e.} $p_T(y|x) = p_S(y|x)$.  In this context, \textit{Importance Sampling} (IS) performs adaptation by weighting the contribution of sample $x$ in the loss by $w(x) = p_T(x) / p_S(x)$ \cite{quionero2009dataset}.  Although IS seems natural when unlabelled data from the target domain is available, the covariate shift assumption is not sufficient to guarantee successful adaptation \cite{ben2007analysis}. Moreover, for high dimensional data \cite{d2017overlap} such as texts or images, the shift between $p_S(x)$ and $p_T(x)$ results from non-overlapping supports leading to unbounded weights \cite{johansson2019support}. 

In this particular context, representations can help to reconcile non-overlapping supports \cite{ben2007analysis}. This seminal idea, and the corresponding theoretical bound of the target risk from \cite{ben2007analysis}, has led to a wide variety of deep learning approaches \cite{ganin2015unsupervised,long2015learning,long2018conditional} which aim to learn a so-called \textit{domain invariant representation}:
\begin{equation}
    p_S(z) \approx p_T(z)
    \label{domain_invariance}
\end{equation}
where $z:=\varphi(x)$ for a given non-linear representation $\varphi$. These assume that the \textit{transferability} of representations, defined as the combined error of an ideal classifier, remains low during learning. Unfortunately, this quantity involves target labels and is thus intractable. More importantly, looking for strict invariant representations, $p_S(z) = p_T(z)$, hurts the transferability of representations \cite{johansson2019support,liu2019transferable,wu2019domain,zhao2019learning}. In particular, there is a fundamental trade-off between learning invariant representations and preserving transferability in presence of label shift ($p_T(y) \neq p_S(y)$)  \cite{zhao2019learning}. To mitigate this trade-off, some recent works suggest to relax domain invariance by weighting samples \cite{cao2018unsupervised,wu2019domain,you2019universal,cao2018partial}. This strategy differs with  (\ref{domain_invariance}) by aligning a \textit{weighted source} distribution with the target distribution:
\begin{equation}
    w(z) p_S(z) \approx p_T(z)
\end{equation}
for some weights $w(z)$. We now have two tools, $w$ and $\varphi$, which need to be calibrated to obtain distribution alignment. Which one should be promoted? How weights preserve good transferability of representations? 

While most prior works focus on the invariance error for achieving adaptation \cite{ganin2015unsupervised,long2015learning,long2018conditional}, this paper focuses on the transferability of representations.  We show that weights allow to design an interpretable generalization bound where transferability and invariance errors are uncoupled.  In addition, we discuss the role of inductive design for both the classifier and the weights in addressing the lack of labelled data in the target domain.  Our contributions are the following:

\begin{enumerate}
    \item We introduce a new bound of the target risk which incorporates both weights and domain invariant representations. Two new terms are introduced. The first is an \textit{invariance error}  which promotes alignment between a weighted source distribution of representations and the target distribution of representations. The second, named \textit{transferability error}, involves labelling functions from both source and target domains.
    
    \item We highlight the role of \textbf{inductive bias} for approximating the transferability error. First, we establish connections between our bound and popular approaches for UDA which use target predicted labels during adaptation, in particular Conditional Domain Adaptation \cite{long2018conditional} and Minimal Entropy \cite{grandvalet2005semi}. Second, we show that the inductive design of weights has an impact on representation invariance.
    \item We derive a new learning procedure for UDA. The particularity of this procedure is to only minimize the transferability error while controlling representation invariance with weights. Since the transferability error involves target labels, we use the predicted labels  during learning.
    \item  We provide an empirical illustration of our framework on two DA benchmarks (\textbf{Digits} and \textbf{Office31} datasets). We stress-test our learning scheme by modifying strongly the label distribution in the source domain. While methods based on invariant representations deteriorate considerably in this context, our procedure remains robust. 
\end{enumerate}

\section{Preliminaries}
We introduce the \textit{source} distribution \textit{i.e.} data where the model is trained with supervision and the \textit{target} distribution \textit{i.e.} data where the model is tested or applied. Formally, for two random variables $(X,Y)$ on a given space $\mathcal X \times \mathcal Y$, we introduce two distributions: the source distribution $p_S(x,y)$ and the target distribution $p_T(x,y)$. Here, labels are one-hot encoded \textit{i.e.} $y \in [0,1]^{C}$ such that $\sum_c y_c = 1$ where $C$ is the number of classes. The distributional shift situation is then characterized by $p_S(x,y) \neq p_T(x,y)$ \cite{quionero2009dataset}. In the rest of the paper, we use the index notation $S$ and $T$ to differentiate source and target terms.  We define the hypothesis class $\mathcal H$ as a subset of functions from $\mathcal X$ to $\mathcal Y$ which is the composition of a representation class $\Phi$ and a classifier class $\mathcal G$, \textit{i.e.} $\mathcal H= \mathcal G \circ \Phi$. For the ease of reading, given a classifier $g \in \mathcal G$ and a representation $\varphi \in \Phi$, we note $g\varphi := g\circ\varphi$. Furthermore, in the definition $z:=\varphi(x)$, we refer indifferently to $z$, $\varphi$, $Z:=\varphi(X)$ as the \textit{representation}. For two given $h$ and $h'\in \mathcal H$ and $\ell$ the $L^2$ loss $\ell(y,y') = ||y-y'||^2$, the risk in domain $D \in \{S,T\}$ is noted:

\begin{equation}
    \varepsilon_D(h) := \mathbb E_D[\ell(h(X),Y)] 
\end{equation} 
and $\varepsilon_D(h,h') := \mathbb E_D[\ell(h(X), h'(X))]$. In the seminal works \cite{ben2007analysis,mansour2009domain}, a theoretical limit of the target risk when using a representation $\varphi$ has been derived:
\begin{bound}[Ben David et al.]
\label{ben_david}
Let $d_{\mathcal G}(\varphi) = \sup_{g,g' \in \mathcal G}|\varepsilon_S(g \varphi, g'\varphi) - \varepsilon_T(g \varphi, g'\varphi)|$ and $\lambda_{\mathcal G}(\varphi) =  \inf_{g\in \mathcal G}\{ \varepsilon_S(g \varphi) + \varepsilon_T(g\varphi)\}$, $\forall g \in \mathcal G, \forall \varphi \in \Phi$:
\begin{equation}
    \varepsilon_T(g\varphi)\leq \varepsilon_S(g\varphi) + d_{\mathcal G}(\varphi)  + \lambda_{\mathcal G}(\varphi)
\end{equation}
\end{bound}
This generalization bound ensures that the target risk $\varepsilon_T(g\varphi$) is bounded by the sum of the source risk $\varepsilon_S(g\varphi$), the disagreement risk between two classifiers from representations $d_{\mathcal G}(\varphi)$, and a third term, $\lambda_{\mathcal G}(\varphi)$, which quantifies the ability to perform well in both domains from representations. The latter is referred to as the \textit{adaptability} error of representations. It is intractable in practice since it involves labels from the target distribution. Promoting distribution invariance of representations, \textit{i.e.} $p_S(z)$ close to $p_T(z)$, results on a low $d_{\mathcal G}(\varphi)$. More precisely: 
\begin{equation}
    d_\mathcal G(\varphi) \leq 2 \sup_{d \in \mathcal D} |p_S(d(z)=1) - p_T(d(z)=0)|
    \label{bound_discriminator}
\end{equation} 
where $\mathcal D$ is the so-called set of \textit{discriminators} or \textit{critics} which verifies $\mathcal D \supset \{g \oplus g': (g,g') \in \mathcal G^2 \}$ where $\oplus$ is the $\mathrm{XOR}$ function \cite{ganin2015unsupervised}. Since the domain invariance term $d_{\mathcal G}(\varphi)$ is expressed as a supremal value on classifiers, it is suitable for domain adversarial learning with critic functions. Conversely, the adaptability error $\lambda_{\mathcal G}(\varphi)$ is expressed as an infremal value. This '$\sup / \inf$' duality induces an unexpected trade-off when learning domain invariant representations: 

\begin{proposition}[Invariance hurts adaptability \cite{johansson2019support,zhao2019learning}]
\label{invariance_hurts}
Let $\psi$ be a representation which is a richer feature extractor than $\varphi$: $\mathcal G\circ\varphi \subset \mathcal G \circ \psi$. Then,
\begin{equation}
    d_{\mathcal G}(\varphi) \leq d_{\mathcal G}(\psi) \mbox{ while } \lambda_{\mathcal G}(\psi) \leq \lambda_{\mathcal G}(\varphi)
    \label{BD_eq}
\end{equation}
\end{proposition}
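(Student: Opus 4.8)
The plan is to observe that both functionals depend on the representation only through the composed hypothesis set, after which the result reduces to monotonicity of the supremum and infimum under set inclusion. Concretely, for any two maps $h, h' : \mathcal X \to \mathcal Y$ I would introduce the domain-disagreement gap $D(h,h') := |\varepsilon_S(h,h') - \varepsilon_T(h,h')|$ and the joint risk $L(h) := \varepsilon_S(h) + \varepsilon_T(h)$. With this notation the two quantities in the statement rewrite purely in terms of the image sets $\mathcal G\circ\varphi := \{g\varphi : g \in \mathcal G\}$ and $\mathcal G\circ\psi$:
\begin{equation}
    d_{\mathcal G}(\varphi) = \sup_{h, h' \in \mathcal G\circ\varphi} D(h,h'), \qquad \lambda_{\mathcal G}(\varphi) = \inf_{h \in \mathcal G\circ\varphi} L(h),
\end{equation}
and identically for $\psi$. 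The crucial point is that neither functional sees $\varphi$ beyond the set $\mathcal G\circ\varphi$: the loss $\ell$ only ever evaluates the composite maps $g\varphi$, never $g$ and $\varphi$ separately, so $D$ and $L$ are genuinely functions of elements of $\mathcal G\circ\varphi$.

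For the first inequality I would invoke the hypothesis $\mathcal G\circ\varphi \subset \mathcal G\circ\psi$ directly: every pair $(h,h')$ with $h,h' \in \mathcal G\circ\varphi$ is also a pair with $h,h' \in \mathcal G\circ\psi$, so the supremum defining $d_{\mathcal G}(\psi)$ is taken over a superset of the index set defining $d_{\mathcal G}(\varphi)$. Monotonicity of the supremum then yields $d_{\mathcal G}(\varphi) \leq d_{\mathcal G}(\psi)$. Symmetrically, the infimum defining $\lambda_{\mathcal G}(\psi)$ ranges over the larger set $\mathcal G\circ\psi \supset \mathcal G\circ\varphi$, and an infimum over a larger set can only decrease, giving $\lambda_{\mathcal G}(\psi) \leq \lambda_{\mathcal G}(\varphi)$.

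There is no genuinely hard step here; the entire content is the reframing in the first paragraph. The one point deserving care is justifying that $d_{\mathcal G}$ and $\lambda_{\mathcal G}$ factor through the composed set $\mathcal G\circ\varphi$ rather than depending on $\varphi$ itself, so that the richness hypothesis $\mathcal G\circ\varphi \subset \mathcal G\circ\psi$ is exactly the right condition to compare them. Once this is recognized, both claims are instances of the elementary fact that the supremum is monotone and the infimum antitone under set inclusion, and the opposite directions of the two inequalities are simply the shadow of the $\sup/\inf$ duality emphasized in the surrounding discussion.
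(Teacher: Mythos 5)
Your proof is correct. The paper does not actually prove this proposition itself (it imports it from the cited works \cite{johansson2019support,zhao2019learning}), but your argument --- reducing $d_{\mathcal G}$ and $\lambda_{\mathcal G}$ to functionals of the composed hypothesis set and then using that the supremum is monotone and the infimum antitone under the inclusion $\mathcal G\circ\varphi \subset \mathcal G\circ\psi$ --- is exactly the elementary sup/inf argument the paper uses in Appendix \ref{proof:new_trade_off} for its analogous trade-off proposition on $\mathrm{INV}(\varphi)$ and $\varepsilon_T(\mathbf f_T\varphi)$.
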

As a result of proposition \ref{invariance_hurts}, the benefit of representation invariance must be higher than the loss of adaptability, which is impossible to guarantee in practice.

\section{Theory}

\label{theory}
To overcome the limitation raised in proposition \ref{invariance_hurts}, we expose a new bound of the target risk which embeds a new trade-off between invariance and transferability  (\ref{a_new_trade_off}). We show this new bound remains inconsistent with the presence of label shift (\ref{detailed_view_of_tightnes}) and we expose the role of weights to address this problem (\ref{reconciling_weights}).

\subsection{A new trade-off between Invariance and Transferability}
\label{a_new_trade_off}

\subsubsection{Core assumptions.} Our strategy is to express  both the transferability and invariance as a supremum using Integral Probability Measure (IPM) computed on a critic class.  We thus introduce a class of critics suitable for our analysis. Let $\mathcal F$  from $\mathcal Z\to [-1,1]$ and $\mathcal F_C$  from $\mathcal Z\to [-1,1]^{C}$ with the following properties:
\begin{itemize}
    \item (A1) $\mathcal F$ and $\mathcal F_C$ are symmetric (\textit{i.e.} $\forall f \in \mathcal F, -f \in \mathcal F$) and convex.
    \item (A2) $\mathcal G \subset \mathcal F_C$ and $\left  \{\mathbf f \cdot \mathbf f'~ ; ~\mathbf f, \mathbf f' \in \mathcal F_C  \right \} \subset \mathcal F$.
    \item (A3) $\forall \varphi \in \Phi$, $\mathbf f_D(z) \mapsto \mathbb E_D[Y|\varphi(X)=z] \in \mathcal F_C$. \footnote{See Appendix \ref{proof:TVBound_no_w} for more details on this assumption.}
    \item (A4) For two distributions $p$ and $q$ on $\mathcal Z$, $p=q$ if and only if:
    \begin{equation}
        \mathrm{IPM}(p,q ; \mathcal F) := \sup_{f \in \mathcal F} \left \{ \mathbb E_p[f(Z)] - \mathbb E_q[f(Z)] \right \} = 0
    \end{equation}
\end{itemize}
The assumption (A1) ensures that rather comparing two given $\mathbf f$ and $\mathbf f'$, it is enough to study the error of some $\mathbf f'' = \frac{1}{2} (\mathbf f - \mathbf f')$ from $\mathcal F_C$. This brings back a supremum on $\mathcal F_C^2$ to a supremum on $\mathcal F_C$. The assumption (A2), combined with (A1),  ensures that an error $\ell(\mathbf f, \mathbf f') $ can be expressed as a critic function $f \in \mathcal F$ such that $f = \ell(\mathbf f, \mathbf f')$. The assumption (A3) ensures that $\mathcal F_C$ is rich enough to contain label function from representations. Here, $\mathbf f_D(z) = \mathbb E_D[Y|Z=z]$ is a vector of probabilities on classes: $f_D(z)_c = p_D(Y=c|Z=z)$. The last assumption (A4) ensures that the introduced IPM is a distance. Classical tools verify these assumptions \textit{e.g.} continuous functions; here $\mathrm{IPM}(p,q ; \mathcal F)$ is the \textit{Maximum Mean Discrepancy} \cite{gretton2012kernel}  and one can reasonably believe that $\mathbf f_S$ and $\mathbf f_T$ are continuous.

\subsubsection{Invariance and transferability as IPMs.} 

We introduce here two important tools that will guide our analysis:
\begin{itemize}
    \item $\mathrm{INV}(\varphi)$, named \textit{invariance error}, that aims at capturing the difference between source and target distribution of representations, corresponding to:
    \begin{equation}
        \mathrm{INV}(\varphi)  := \sup_{f\in \mathcal F}  \left \{ \mathbb E_T[f(Z)] - \mathbb E_{S}[f(Z)] \right \}
    \end{equation}
    \item $\mathrm{TSF}(\varphi)$, named \textit{transferability error}, that catches if the coupling between $Z$  and $Y$ shifts across domains. For that, we use our class of functions $\mathcal F_C$ and we compute the IPM of $Y\cdot \mathbf f(Z)$, where $\mathbf f \in \mathcal F_C$ and $Y\cdot \mathbf f(Z)$ is the scalar product\footnote{the scalar product between $Y$ and $\mathbf f (Z)$ emerges from the choice of the $L^2$ loss.}, between the source and the target domains:
    \begin{equation}
        \mathrm{TSF}(\varphi) := \sup_{\mathbf f\in \mathcal F_C} \{ \mathbb E_T[Y \cdot \mathbf f (Z)] - \mathbb E_{S}[Y \cdot \mathbf f (Z)] \}
    \end{equation}
\end{itemize}

\subsubsection{A new bound of the target risk.} Using $\mathrm{INV}(\varphi)$ and $\mathrm{TSF}(\varphi)$, we can provide a new bound of the target risk:
\begin{bound}
\label{TVBound_no_w}
$\forall g \in \mathcal G$ and $\forall \varphi \in \Phi$:
\begin{equation}
    \varepsilon_T(g\varphi) \leq \varepsilon_{S}(g\varphi)  + 6\cdot\mathrm{INV}(\varphi) +  
    2\cdot \mathrm{TSF}(\varphi)  + \varepsilon_T(\mathbf f_T\varphi)
\end{equation}
\end{bound}
The proof is in Appendix \ref{proof:TVBound_no_w}. In contrast with bound \ref{ben_david} (Eq. \ref{BD_eq}),  here two IPMs are involved to compare representations ($\mathrm{INV}(\varphi)$ and $\mathrm{TSF}(\varphi)$).  A new term, $\varepsilon_T(\mathbf f_T\varphi)$, reflects the level of noise when fitting labels from representations. All the trade-off between invariance and transferability is embodied in this term:
\begin{proposition}
Let $\psi$ a representation which is a richer feature extractor than $\varphi$: $\mathcal F\circ\varphi \subset \mathcal F \circ \psi$ and $\mathcal F_C\circ\varphi \subset \mathcal F_C \circ \psi$. $\varphi$ is more domain invariant than $\psi$:
\begin{equation}
    \mathrm{INV}(\varphi) \leq \mathrm{INV}(\psi) \mbox{ while } \varepsilon_T (\mathbf f_T^\psi\psi ) \leq \varepsilon_T  (\mathbf f_T^\varphi \varphi)
\end{equation}
where $\mathbf f_T^\varphi(z) = \mathbb E_T[Y|\varphi(X) = z]$ and $\mathbf f_T^\psi(z) = \mathbb E_T[Y|\psi(X)=z]$. Proof in \ref{proof:new_trade_off}.
\end{proposition}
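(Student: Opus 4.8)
The plan is to observe that both inequalities are instances of the same monotonicity principle applied to the two inclusions of composed function classes, $\mathcal F\circ\varphi\subset\mathcal F\circ\psi$ and $\mathcal F_C\circ\varphi\subset\mathcal F_C\circ\psi$, but with \emph{opposite} sup/inf structure — which is exactly what encodes the trade-off. So I would treat the two claims in parallel and let the direction of the (in)equality fall out of whether I am taking a supremum or an infimum over the enlarged class.

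For the invariance inequality, first I would rewrite $\mathrm{INV}$ as a supremum directly over the composed class. Since $f(Z)=f(\varphi(X))=(f\circ\varphi)(X)$, we have $\mathrm{INV}(\varphi)=\sup_{h\in\mathcal F\circ\varphi}\{\mathbb E_T[h(X)]-\mathbb E_S[h(X)]\}$, and likewise with $\psi$ in place of $\varphi$. The inclusion $\mathcal F\circ\varphi\subset\mathcal F\circ\psi$ then gives $\mathrm{INV}(\varphi)\le\mathrm{INV}(\psi)$ immediately, since a supremum taken over a larger index set can only increase.

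For the transferability/noise term I would exploit that, with the $L^2$ loss $\ell$, the map $\mathbf f_T^\varphi\varphi=\mathbb E_T[Y\mid\varphi(X)]$ is the orthogonal projection of $Y$ onto the functions of $\varphi(X)$, hence the minimiser of $\varepsilon_T$ among all such functions. Invoking (A3), which guarantees $\mathbf f_T^\varphi\in\mathcal F_C$, this global minimiser already lies inside $\mathcal F_C\circ\varphi$, so $\varepsilon_T(\mathbf f_T^\varphi\varphi)=\inf_{\mathbf f\in\mathcal F_C}\varepsilon_T(\mathbf f\varphi)=\inf_{h\in\mathcal F_C\circ\varphi}\varepsilon_T(h)$, and the analogous identity holds for $\psi$. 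Now the inclusion $\mathcal F_C\circ\varphi\subset\mathcal F_C\circ\psi$ pushes the infimum the other way: minimising over a larger set can only decrease the value, which yields $\varepsilon_T(\mathbf f_T^\psi\psi)\le\varepsilon_T(\mathbf f_T^\varphi\varphi)$.

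The main obstacle, and the only step that is not pure set-monotonicity, is the characterisation $\varepsilon_T(\mathbf f_T^\varphi\varphi)=\inf_{h\in\mathcal F_C\circ\varphi}\varepsilon_T(h)$. It is not enough that $\mathbb E_T[Y\mid\varphi(X)]$ minimises the risk over \emph{all} measurable functions of $\varphi(X)$; I must verify that this minimiser is actually attained \emph{inside} the critic class, which is precisely what (A3) supplies, together with the $L^2$ structure that makes the conditional expectation the relevant projection. Once this identification is secured for both $\varphi$ and $\psi$, everything collapses to the sign of the monotonicity — supremum increasing, infimum decreasing — and the opposite directions for the two terms are exactly what express that enriching the representation buys invariance at the cost of transferability.
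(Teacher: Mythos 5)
Your proof is correct and follows essentially the same route as the paper's: supremum monotonicity over the enlarged class $\mathcal F\circ\psi$ for the invariance term, and, for the noise term, the identification $\varepsilon_T(\mathbf f_T^\varphi\varphi)=\inf_{\mathbf f\in\mathcal F_C}\varepsilon_T(\mathbf f\varphi)$ via the $L^2$-optimality of the conditional expectation together with (A3), followed by infimum monotonicity. You are in fact slightly more explicit than the paper about why (A3) is indispensable for attaining the measurable-function infimum inside the critic class, but the argument is the same.
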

 Bounding the target risk using IPMs has two advantages. First, it allows to better control the invariance / transferability trade-off since $\varepsilon_T(\mathbf f_T\varphi) \leq \lambda_{\mathcal G}(\varphi)$. This is paid at the cost of $4 \cdot \mathrm{INV}(\varphi) \geq d_{\mathcal G}(\varphi)$ (see Proposition \ref{d_c_inv} in Appendix \ref{proof:TVBound_no_w}). Second, $\varepsilon_T(\mathbf f_T\varphi)$ is source free and indicates whether there is enough information in representations for learning the task in the target domain at first. This means that $\mathrm{TSF}(\varphi)$ is only dedicated to control if aligned representations have the same labels across domains. To illustrate the interest of our new transferability error, we provide visualisation of representations (Fig. \ref{fig:transferability_tsne}) when trained to minimize the adaptability error $\lambda_{\mathcal G}(\varphi) $ from bound \ref{ben_david} and the transferability error $\mathrm{TSF}(\varphi)$ from bound \ref{TVBound_no_w}.

\begin{figure*}[t!]
  \centering
  \subfigure[$\lambda_{\mathcal G}(\varphi)$ adaptability in bound \ref{ben_david} from \cite{ben2007analysis}. Inside class clusters, source and target representations are separated.]{
    \includegraphics[width=0.47\textwidth]{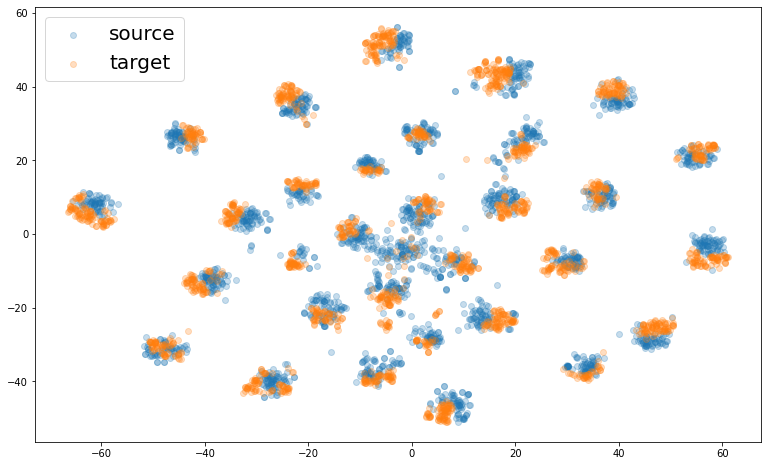}
  }
  \subfigure[$\mathrm{TSF}(\varphi)$ transferability from bound \ref{TVBound_no_w} (contribution). Inside class clusters, source and target representations are not distinguishable]{
    \includegraphics[width=0.47\textwidth]{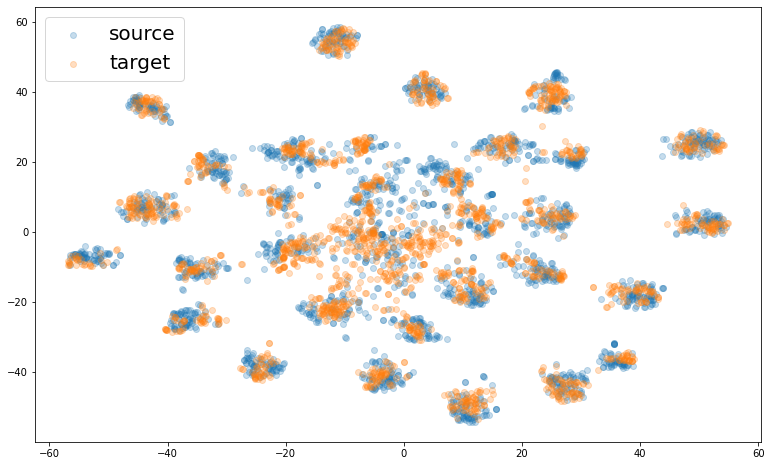}
  }
  \caption{t-SNE \cite{maaten2008visualizing} visualisation of representations when trained to minimize (a) adaptability error $\lambda_{\mathcal G}(\varphi)$ from \cite{ben2007analysis}, (b) transferability error $\mathrm{TSF}(\varphi)$ introduced in the present work. The task used is A$\to$W of the \textbf{Office31} dataset. \textit{Labels in the target domain are used during learning in this specific experiment}. For both visualisations of representations, we observe well-separated clusters associated to the label classification task. Inside those clusters, we observe a separation between source and target representations for $\lambda_{\mathcal G}(\varphi)$. That means that representations embed domain information and thus are not invariant. On the contrary, source and target representations are much more overlapping inside of each cluster with  $\mathrm{TSF}(\varphi)$, illustrating that this new term is not conflictual with invariance.}
  \vspace{-0.2in}
  \label{fig:transferability_tsne}
\end{figure*}

\subsection{A detailed view on the property of tightness}


\label{detailed_view_of_tightnes}
 An interesting property of the bound, named tightness,
is the case when $\mathrm{INV}(\varphi)=0$ and $\mathrm{\mathrm{TSF}(\varphi)} =0$ simultaneously. The condition of tightness of the bound provides rich information on the properties of representations. 
\begin{proposition}
\label{tight_no_w}
$\mathrm{INV}(\varphi) = \mathrm{TSF}(\varphi) = 0$ if and only if $
    p_S(y,z) = p_T(y,z)$.
\end{proposition}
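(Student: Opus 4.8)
The plan is to prove both implications, treating the reverse direction as routine and the forward direction as the substantive one. For the reverse direction, assume $p_S(y,z) = p_T(y,z)$. Marginalizing over $y$ gives $p_S(z) = p_T(z)$, so the supremum defining $\mathrm{INV}(\varphi)$ is exactly $\mathrm{IPM}(p_T(z),p_S(z);\mathcal F)$ and vanishes. Likewise, since $\mathbb E_D[Y\cdot \mathbf f(Z)]$ is the integral of $y\cdot \mathbf f(z)$ against the joint $p_D(y,z)$, the two domain expectations coincide for every $\mathbf f \in \mathcal F_C$, so $\mathrm{TSF}(\varphi)=0$.

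For the forward direction I would first exploit $\mathrm{INV}(\varphi)=0$: the supremum defining $\mathrm{INV}(\varphi)$ equals $\mathrm{IPM}(p_T(z),p_S(z);\mathcal F)$, so assumption (A4) yields $p_S(z)=p_T(z)=:p(z)$, i.e. the $Z$-marginals agree. The next step is to convert $\mathrm{TSF}(\varphi)=0$ into a statement about conditional label expectations. Using the tower property I would rewrite $\mathbb E_D[Y\cdot \mathbf f(Z)] = \mathbb E_D[\mathbf f_D(Z)\cdot \mathbf f(Z)]$ with $\mathbf f_D(z)=\mathbb E_D[Y\mid Z=z]$. Because both expectations are now taken against the common marginal $p(z)$, the condition $\mathrm{TSF}(\varphi)=0$ becomes
\[
\int \big(\mathbf f_T(z) - \mathbf f_S(z)\big)\cdot \mathbf f(z)\, p(z)\, dz = 0 \quad \text{for all } \mathbf f \in \mathcal F_C .
\]

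The crux, and the step I expect to be the main obstacle, is promoting this orthogonality to pointwise equality $\mathbf f_S=\mathbf f_T$, since it holds only against test functions drawn from $\mathcal F_C$ rather than all bounded functions. This is precisely where assumptions (A1) and (A3) are used: (A3) guarantees $\mathbf f_S, \mathbf f_T \in \mathcal F_C$, and the symmetry and convexity of $\mathcal F_C$ (A1) guarantee that the specific test function $\mathbf f := \tfrac12(\mathbf f_T - \mathbf f_S)$ itself lies in $\mathcal F_C$. Substituting this choice collapses the integral to $\tfrac12\int \|\mathbf f_T(z)-\mathbf f_S(z)\|^2\, p(z)\, dz = 0$, and nonnegativity of the integrand forces $\mathbf f_T(z)=\mathbf f_S(z)$ for $p$-almost every $z$.

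Finally I would assemble the two conclusions. Since the labels are one-hot encoded, $\mathbf f_D(z)$ has components $f_D(z)_c = p_D(Y=c\mid Z=z)$, so $\mathbf f_S=\mathbf f_T$ almost everywhere is exactly $p_S(y\mid z)=p_T(y\mid z)$. Combining this with the marginal equality $p_S(z)=p_T(z)$ yields $p_S(y,z)=p_S(y\mid z)\,p_S(z)=p_T(y\mid z)\,p_T(z)=p_T(y,z)$, which closes the argument.
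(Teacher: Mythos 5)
Your proof is correct and follows essentially the same route as the paper's: use (A4) to get $p_S(z)=p_T(z)$ from $\mathrm{INV}(\varphi)=0$, rewrite $\mathrm{TSF}(\varphi)$ via the tower property against the common marginal, and plug in the critic $\tfrac12(\mathbf f_T-\mathbf f_S)$ (admissible by (A1) and (A3)) to force $\mathbf f_S=\mathbf f_T$ almost everywhere. The only differences are cosmetic: you spell out the ``trivial'' converse and the final assembly of conditionals and marginals more explicitly than the paper does.
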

The proof is given in Appendix \ref{proof:tightness_TVBound_no_w}. Two important points should be noted:
\begin{enumerate}
    \item  $\mathrm{INV}(\varphi) = 0$ ensures that  $p_S(z) = p_T(z)$, using (A4). Similarly, $\mathrm{TSF(\varphi)} =0 $ leads to $p_S(y,z)= p_T(y,z)$. Since $p_S(y,z)= p_T(y,z)$ implies $p_S(z)= p_T(z)$, $\mathrm{INV}(\varphi)$ does not bring more substantial information about representations distribution than $\mathrm{TSF}(\varphi)$. More precisely, one can show that $\mathrm{TSF}(\varphi ) \geq \mathrm{INV}(\varphi)$ noting that $Y\cdot \mathbf f(Z) = f(z)$ when $\mathbf f(z) = (f(z), ..., f(z))$ for $f\in\mathcal F$.
    \item Second, the equality $p_S(y,z)=  p_T(y,z)$ also implies that $p_S(y)=p_T(y)$. Therefore, in the context of label shift (when $p_S(y) \neq p_T(y)$), the transferability error cannot be null. This is a big hurdle since it is clearly established that most real world UDA tasks exhibit some label shift. This bound highlights the fact that representation invariance alone can not address UDA in complex settings such as the label shift one. 
\end{enumerate}

\subsection{Reconciling Weights and Invariant Representations.}  
\label{reconciling_weights}
Based on the interesting observations from \cite{johansson2019support,zhao2019learning} and following the line of study that proposed to relax invariance using weights \cite{cao2018partial,zhang2018importance,you2019universal,wu2019domain}, we propose to adapt the bound by incorporating weights. More precisely, we study the effect of modifying the source distribution $p_S(z)$ to a \textit{weighted source} distribution $w(z) p_S(z)$ where $w$ is a positive function which verifies $\mathbb E_S[w(Z)] = 1$. By replacing $p_S(z)$ by $w(z)p_S(z)$ (distribution referred as $w\cdot S$) in bound \ref{TVBound_no_w}, we obtain a new bound of the target risk incorporating both weights and representations:

\begin{bound}
\label{TV_bound_with_w}
$\forall g \in \mathcal G, \forall w: \mathcal Z\to \mathbb R^+$ such that $\mathbb E_S[w(z)]=1$:
\begin{align}
    \notag \varepsilon_T(g\varphi) \leq \varepsilon_{w\cdot S}(g\varphi)  +6 \cdot \mathrm{INV}(w, \varphi) +  
    2 \cdot \mathrm{TSF}(w, \varphi)  + \varepsilon_T(\mathbf f_T \varphi)
\end{align}
where $ \mathrm{INV}(w, \varphi) := \displaystyle{\sup_{f\in \mathcal F}}  \left \{ \mathbb E_T[f(Z)] - \mathbb E_{S}[w(Z)f(Z)] \right \} $ and $\mathrm{TSF}(w, \varphi)  := \displaystyle{\sup_{\mathbf f\in \mathcal F_{C}}}\{ $ $\mathbb E_T[Y \cdot \mathbf f(Z)]    - \mathbb E_{S}[w(Z) Y \cdot \mathbf f(Z)] \}$.
\end{bound}
As for the previous bound \ref{TVBound_no_w}, the property of tightness, \textit{i.e.} when invariance and transferability are null simultaneously, leads to interesting observations:
\begin{proposition}
\label{tight}
$\mathrm{INV}(w,\varphi) = \mathrm{TSF}(w,\varphi) = 0$ if and only if $
    w(z) = \frac{p_T(z)}{p_S(z)}$ and $\mathbb E_T[Y|Z=z] = \mathbb E_S[Y|Z=z]$. The proof is given in Appendix \ref{proof:tightness_TV_bound_with_w}.
\end{proposition}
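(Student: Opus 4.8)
The plan is to prove both implications, and the argument closely mirrors that of Proposition~\ref{tight_no_w}, to which it reduces when $w\equiv 1$. The first move, common to both error terms, is to exploit the symmetry assumption (A1): since $\mathcal F$ and $\mathcal F_C$ are symmetric, a supremum of the form $\sup_{f}\{\mathbb E_T[\,\cdot\,]-\mathbb E_S[w(Z)\,\cdot\,]\}$ vanishes if and only if the bracketed expression vanishes for \emph{every} admissible critic (apply the defining inequality to both $f$ and $-f$). Hence $\mathrm{INV}(w,\varphi)=0$ is equivalent to $\mathbb E_T[f(Z)]=\mathbb E_S[w(Z)f(Z)]$ for all $f\in\mathcal F$, and $\mathrm{TSF}(w,\varphi)=0$ to $\mathbb E_T[Y\cdot\mathbf f(Z)]=\mathbb E_S[w(Z)\,Y\cdot\mathbf f(Z)]$ for all $\mathbf f\in\mathcal F_C$.

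For the forward direction I would first treat the invariance equality. Because $w\ge 0$ and $\mathbb E_S[w(Z)]=1$, the weighted source $w\cdot p_S$ is a genuine probability measure, and writing $\mathbb E_S[w(Z)f(Z)]=\mathbb E_{w\cdot S}[f(Z)]$ identifies the condition as $\mathrm{IPM}(p_T,w\cdot p_S;\mathcal F)=0$. Assumption (A4) then yields at once $w(z)p_S(z)=p_T(z)$, i.e. $w(z)=p_T(z)/p_S(z)$.

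Next comes the transferability equality, which is the crux. Conditioning on $Z$ through the tower property rewrites both sides in terms of the conditional label functions $\mathbf f_S(z)=\mathbb E_S[Y|Z=z]$ and $\mathbf f_T(z)=\mathbb E_T[Y|Z=z]$, giving $\mathbb E_T[\mathbf f_T(Z)\cdot\mathbf f(Z)]=\mathbb E_S[w(Z)\,\mathbf f_S(Z)\cdot\mathbf f(Z)]$. This is where the two conditions interact: substituting the relation $w\,p_S=p_T$ just obtained converts the weighted source integral into a target integral, so the equality collapses to $\mathbb E_T[(\mathbf f_T(Z)-\mathbf f_S(Z))\cdot\mathbf f(Z)]=0$ for all $\mathbf f\in\mathcal F_C$. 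By (A3) both $\mathbf f_S$ and $\mathbf f_T$ belong to $\mathcal F_C$, so symmetry and convexity (A1) make $\mathbf f=\tfrac12(\mathbf f_T-\mathbf f_S)\in\mathcal F_C$ an admissible test function; inserting it produces $\tfrac12\,\mathbb E_T[\|\mathbf f_T(Z)-\mathbf f_S(Z)\|^2]=0$, forcing $\mathbf f_T=\mathbf f_S$ $p_T$-almost everywhere, which is exactly $\mathbb E_T[Y|Z=z]=\mathbb E_S[Y|Z=z]$.

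The converse is a direct verification. Assuming $w=p_T/p_S$, the change-of-measure identity $\mathbb E_S[w(Z)\,g(Z)]=\mathbb E_T[g(Z)]$ holds for every integrable $g$, which immediately gives $\mathrm{INV}(w,\varphi)=0$; applying it to $g(Z)=\mathbf f_S(Z)\cdot\mathbf f(Z)$ together with $\mathbf f_S=\mathbf f_T$ yields $\mathbb E_S[w(Z)\,Y\cdot\mathbf f(Z)]=\mathbb E_T[Y\cdot\mathbf f(Z)]$ for all $\mathbf f$, hence $\mathrm{TSF}(w,\varphi)=0$. I expect the only genuine difficulty to lie in the coupling within the forward direction: one must feed the invariance conclusion $w\,p_S=p_T$ into the transferability equation \emph{before} the square-completion critic $\tfrac12(\mathbf f_T-\mathbf f_S)$ can do its work, and one must invoke (A3) to guarantee that the conditional means are themselves legitimate critics; the remaining manipulations are routine.
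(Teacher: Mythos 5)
Your proof is correct and follows essentially the same route as the paper's: apply (A4) to the invariance term to get $w(z)p_S(z)=p_T(z)$, rewrite the transferability term via conditional expectations, substitute the measure identity so everything lives under $p_T$, and then use the admissible critic $\tfrac12(\mathbf f_T-\mathbf f_S)$ (legitimate by (A1) and (A3)) to force $\mathbf f_S=\mathbf f_T$ $p_T$-almost surely, with the converse checked by change of measure. Your write-up is in fact slightly more careful than the appendix proof, which in its final display silently drops the weight $w$ and mixes up $p_S$ and $p_T$ in stating the almost-sure conclusion; your version keeps the measures straight.
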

This proposition means that the nullity of invariance error, \textit{i.e.} $\mathrm{INV}(w,\varphi) = 0$, implies distribution alignment, \textit{i.e.} $w(z) p_S(z) = p_T(z)$. This is of strong interest since both representations and weights are involved for achieving domain invariance. The nullity of the transferability error, \textit{i.e.} $\mathrm{TSF}(w,\varphi) =0$, implies that labelling functions, $\mathbf f:z \mapsto \mathbb E[Y|Z=z]$, are conserved across domains. Furthermore, the equality $\mathbb E_T[Y|Z] = \mathbb E_S[Y|Z]$ interestingly resonates with a recent line of work called \textit{Invariant Risk Minimization} (IRM) \cite{arjovsky2019invariant}. Incorporating weights in the bound thus brings two benefits: 
\begin{enumerate}
    \item First, it raises the inconsistency issue of invariant representations in presence of label shift, as mentioned in section \ref{tight_no_w}. Indeed, tightness is not conflicting with label shift.
    \item $\mathrm{TSF}(w, \varphi)$ and $\mathrm{INV}(w,\varphi)$ have two disctinct roles: the former promotes domain invariance of representations while the latter controls whether aligned representations share the same labels across domains.
\end{enumerate}

\section{The role of Inductive Bias}
\textit{Inductive Bias} refers to the set of assumptions which improves generalization of a model trained on an empirical distribution. For instance, a specific neural network architecture or a well-suited regularization are prototypes of inductive biases. First, we provide a theoretical analysis of the role of inductive bias for addressing the lack of labelling data in the target domain (\ref{inductive_classifier}), which is the most challenging part of \textit{Unsupervised} Domain Adaptation. Second, we describe the effect of weights to induce invariance property on representations (\ref{inductive_weights}).

\subsection{Inductive design of a classifier} 
\label{inductive_classifier}
\subsubsection{General Formulation.} Our strategy consists in approximating target labels error through a classifier $\tilde g \in \mathcal G$. We refer to the latter as the inductive design of the classifier. Our proposition follows the intuitive idea  which states that the best source classifier, $g_S:= \arg \min_{g\in \mathcal G} \varepsilon_S(g\varphi)$, is not necessarily the best target classifier \textit{i.e.} $g_S \neq \arg  \min_{g\in \mathcal G} \varepsilon_T(g\varphi)$. For instance, a well-suited regularization in the target domain, noted $\Omega_T(g)$ may improve performance, \textit{i.e.} setting $\tilde g := \arg \min_{g\in \mathcal G} \varepsilon_S(g\varphi) + \lambda \cdot \Omega_T(g)$ may lead to $\varepsilon_T(\tilde g\varphi) \leq \varepsilon_T(g_S\varphi)$. We formalize this idea through the following definition:
\begin{definition}[Inductive design of a classifier] We say that there is an inductive design of a classifier at level $0 < \beta \leq 1$ if for any representations $\varphi$, noting $g_S = \arg \min_{g \in \mathcal G} \varepsilon_S (g \varphi)$, we can determine $\tilde g$ such that: 
\begin{equation}
    \varepsilon_T(\tilde g\varphi) \leq \beta \varepsilon_T(g_S\varphi)
\end{equation}
We say the inductive design is $\beta-$strong when $\beta <1$ and weak when $\beta=1$.
\end{definition}
In this definition, $\beta$ does not depend of $\varphi$, which is a strong assumption, and embodies the strength of the inductive design. The closer to 1 is $\beta$, the less improvement we can expect using the inductive classifier $\tilde g$. We now study the impact of the inductive design of a classifier in our previous bound \ref{TV_bound_with_w}. Thus, we introduce the approximated transferability error:
\begin{equation}
    \widehat{\mathrm{TSF}}(w, \varphi, \tilde g) = \sup_{\mathbf f\in \mathcal F_C} \{ \mathbb E_T[\tilde  g(Z) \cdot \mathbf f(Z)] - \mathbb E_{S}[w(Z) Y \cdot \mathbf f(Z)] \}
\end{equation}
leading to a bound of the target risk where transferability is target labels free:
\begin{bound}[Inductive Bias and Guarantee]
\label{IB}
Let $\varphi \in \Phi$ and $w: \mathcal Z\to \mathbb R^+$ such that $\mathbb E_S[w(z)]=1$ and a $\beta-$strong inductive classifier $\tilde g$ and $\rho := \frac{\beta}{1- \beta}$ then:
\begin{equation}
    \varepsilon_T(\tilde g\varphi) \leq \rho \left ( \varepsilon_{w\cdot S}(g_{w\cdot S}\varphi)  + 6\cdot \mathrm{INV}(w, \varphi) +  
    2\cdot  \widehat{\mathrm{TSF}}(w, \varphi, \tilde g)  + \varepsilon_T(\mathbf f_T\varphi)\right) 
\end{equation}
\end{bound}
The proof is given in Appendix \ref{proof:IB}. Here, the target labels are only involved in $\varepsilon_T(\mathbf f_T\varphi)$ which reflects the level of noise when fitting labels from representations. Therefore, transferability is now free of target labels. This is an important result since the difficulty of UDA lies in the lack of labelled data in the target domain. It is also interesting to note that the weaker the inductive bias ($\beta \to 1$), the higher the bound and vice versa.




\subsubsection{The role of predicted labels.} Predicted labels play an important role in UDA. In light of the inductive classifier, this means that $\tilde g$ is simply set as $g_{w\cdot S}$. This is a weak inductive design ($\beta=1$), thus, theoretical guarantee from bound \ref{IB} is not applicable. However, there is empirical evidence that showed that predicted labels help in UDA \cite{grandvalet2005semi,long2018conditional}. It suggests that this inductive design may find some strength in the finite sample regime. A better understanding of this phenomenon is left for future work (See Appendix \ref{open_dicussion}). In the rest of the paper, we study this weak inductive bias by establishing connections between $\widehat{\mathrm{TSF}}(w, \varphi, g_S)$ and popular approaches of the literature.

\paragraph{Connections with Conditional Domain Adaptation Network.} CDAN \cite{long2018conditional} aims to align the joint distribution $(\hat Y, Z)$ across domains, where $\hat Y = g_S \varphi(X)$ are estimated labels. It is performed by exposing the tensor product between $\hat Y$ and $Z$ to a discriminator. It leads to substantial empirical improvements compared to \textit{Domain Adversarial Neural Networks} (DANN) \cite{ganin2015unsupervised}. We can observe that it is a similar objective to $\widehat{\mathrm{TSF}}(w,\varphi, g_S)$ in the particular case where $w(z)=1$.

\paragraph{Connections with Minimal Entropy.} MinEnt \cite{grandvalet2005semi} states that an adapted classifier is confident in prediction on target samples. It suggests the regularization: $\Omega_T(g) := H(\hat Y|Z) = \mathbb E_{Z \sim p_T}[-g(Z) \cdot \log g(Z)]$ where $H$ is the entropy. If labels are smooth enough (\textit{i.e.} it exists $\alpha$ such that $\frac{\alpha}{C-1} \leq \mathbb E_S[\hat Y |Z] \leq 1 -\alpha$), MinEnt is a lower bound of transferability: $\widehat{\mathrm{TSF}}(w,\varphi, g_s) \geq \eta \left (H_T(g_S\varphi) - \mathrm{CE}_{w\cdot S}(Y, g_S\varphi)\right)$ for some $\eta >0$ and $\mathrm{CE}_{w\cdot S}(g_S\varphi, Y)$ is the cross-entropy between $g_S\varphi$ and $Y$ on $w(z) p_S(z)$ (see Appendix \ref{proof:MinENT}). 

\subsection{Inductive design of weights}
\label{inductive_weights}
While the bounds introduced in the present work involve weights in the representation space, there is an abundant literature that builds weights in order to relax the domain invariance of representations \cite{cao2018unsupervised,wu2019domain,you2019universal,cao2018partial}. We study the effect of inductive design of $w$ on representations. To conduct the analysis, we consider there is a non-linear transformation $\psi$ from $\mathcal Z$ to $\mathcal Z'$ and we assume that weights are computed in $\mathcal Z'$, \textit{i.e.} $w$ is a function of $z':= \psi(z) \in \mathcal Z'$. We refer to this as \textit{inductive design of weights}. For instance, in the particular case where $\psi = g_S$, weights are designed as $w(\hat y) = p_T(\hat Y = \hat y) / p_S(\hat Y = \hat y)$ \cite{cao2018partial} where $\hat Y = g\varphi(X)$. In \cite{long2018conditional}, \textit{entropy conditioning} is introduced by designing weights $ w(z') \propto 1+ e^{-z'}$ where $z' = - \frac{1}{C}\sum_{1\leq c \leq C} g_{S,c} \log(g_{S,c})$ is the predictions entropy. The inductive design of weights imposes invariance property on representations:
\begin{proposition}[Inductive design of $w$ and invariance] Let $\psi: \mathcal Z \to \mathcal Z'$ such that $\mathcal F \circ \psi \subset \mathcal F$ and $\mathcal F_C\circ \psi\subset \mathcal F_C$. Let $w:\mathcal Z' \to \mathbb R^+$ such that $\mathbb E_S[w(Z')]=1$ and we note $Z':= \psi(Z)$. Then, $\mathrm{INV}(w, \varphi) = \mathrm{TSF}(w,\varphi) = 0$ if and only if:
\begin{equation}
    w(z') = \frac{p_T(z')}{p_S(z')} ~~\mbox{ and } ~~ p_S(z|z') = p_T(z|z')
\end{equation}
while both $\mathbf f_S^\varphi = \mathbf f_T^\varphi$ and $\mathbf f_S^\psi = \mathbf f_T^\psi$. The proof is given in Appendix \ref{proof:inductive_weight}.
\end{proposition}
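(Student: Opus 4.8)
The plan is to reduce this proposition to the already-established tightness result of Proposition~\ref{tight} by exploiting the structural hypotheses $\mathcal F \circ \psi \subset \mathcal F$ and $\mathcal F_C \circ \psi \subset \mathcal F_C$. Since $w$ is a function of $z' = \psi(z)$, the weighted source distribution $w(z')p_S(z)$ is a legitimate reweighting in $\mathcal Z$, so Proposition~\ref{tight} applies verbatim and gives us that $\mathrm{INV}(w,\varphi) = \mathrm{TSF}(w,\varphi) = 0$ is equivalent to $w(z') = p_T(z)/p_S(z)$ together with $\mathbf f_S^\varphi = \mathbf f_T^\varphi$. The real content of the present statement is to translate the condition ``$w$ is a fixed function of $z'$ and $w(z') = p_T(z)/p_S(z)$'' into the claimed pair of conditions involving the coarser variable $z'$, namely $w(z') = p_T(z')/p_S(z')$ and $p_S(z|z') = p_T(z|z')$, and to derive the additional equality $\mathbf f_S^\psi = \mathbf f_T^\psi$.

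First I would prove the ``only if'' direction. Assuming tightness, Proposition~\ref{tight} yields $w(\psi(z)) p_S(z) = p_T(z)$ as distributions on $\mathcal Z$. I would marginalize this identity over the fibers of $\psi$: integrating both sides against any test function depending only on $z'$, and using that $w$ is constant on each fiber, produces $w(z') p_S(z') = p_T(z')$, which is the first claimed equality. Substituting this back, the identity $w(z')p_S(z) = p_T(z)$ rewrites as $\frac{p_T(z')}{p_S(z')} p_S(z) = p_T(z)$, i.e.\ $p_T(z') p_S(z) = p_S(z') p_T(z)$; dividing by $p_S(z')p_T(z')$ and recognizing $p_D(z|z') = p_D(z)/p_D(z')$ gives $p_S(z|z') = p_T(z|z')$, the second equality. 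The label-function equalities are obtained directly: $\mathbf f_S^\varphi = \mathbf f_T^\varphi$ is immediate from Proposition~\ref{tight}, and $\mathbf f_S^\psi = \mathbf f_T^\psi$ follows because conditioning on $z'$ is a further averaging of $\mathbb E_D[Y|Z=z]$ over the fiber, and since both $p_S(z|z') = p_T(z|z')$ and $\mathbf f_S^\varphi = \mathbf f_T^\varphi$ hold, the two fiber-averages must agree.

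For the ``if'' direction I would simply reverse the computation: the two hypotheses $w(z') = p_T(z')/p_S(z')$ and $p_S(z|z') = p_T(z|z')$ multiply together, via $p_D(z) = p_D(z|z')p_D(z')$, to recover $w(z')p_S(z) = p_T(z)$, and combined with $\mathbf f_S^\varphi = \mathbf f_T^\varphi$ this is exactly the tightness condition of Proposition~\ref{tight}, giving $\mathrm{INV}(w,\varphi) = \mathrm{TSF}(w,\varphi) = 0$. The subtle point that must be handled carefully, and which I expect to be the main obstacle, is the role of the assumptions $\mathcal F \circ \psi \subset \mathcal F$ and $\mathcal F_C \circ \psi \subset \mathcal F_C$. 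These are what guarantee that the invariance and transferability suprema, when restricted to critics of the form $f \circ \psi$, still detect discrepancies at the level of the coarse variable $z'$; in particular they ensure that $w(z')p_S(z') = p_T(z')$ can indeed be \emph{forced} by the nullity of $\mathrm{INV}(w,\varphi)$ rather than merely being consistent with it. I would need to verify that testing against $f \in \mathcal F$ pulled back through $\psi$ yields, through (A4) applied on $\mathcal Z'$, the marginal identity on $z'$, and likewise that $\mathcal F_C \circ \psi \subset \mathcal F_C$ delivers $\mathbf f_S^\psi = \mathbf f_T^\psi$; making this pullback argument precise, and checking it interacts correctly with the fiberwise-constant structure of $w$, is where the care is required, while the remaining manipulations are routine applications of the conditional-probability factorization.
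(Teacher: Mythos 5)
Your proof is correct, but it takes a genuinely different route from the paper's. You reduce everything to Proposition~\ref{tight} applied to the weight $w\circ\psi$ (legitimate, since $w\circ\psi:\mathcal Z\to\mathbb R^+$ and $\mathbb E_S[w(\psi(Z))]=1$), which yields the pointwise identity $w(\psi(z))\,p_S(z)=p_T(z)$ together with $\mathbf f_S^\varphi=\mathbf f_T^\varphi$; the rest is pure measure theory: marginalizing over the fibers of $\psi$ gives $w(z')p_S(z')=p_T(z')$, division gives $p_S(z|z')=p_T(z|z')$, and the tower property (fiber-averaging $\mathbf f_D^\varphi$ against the common conditional $p(z|z')$) gives $\mathbf f_S^\psi=\mathbf f_T^\psi$; the converse reverses these factorizations. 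The paper instead argues directly on the suprema: it uses $\mathcal F\circ\psi\subset\mathcal F$ to restrict the invariance critics to pullbacks $f\circ\psi$ and extract $w(z')p_S(z')=p_T(z')$ from (A4), combines this with the $\mathcal Z$-level identity in the same ratio manipulation you use, then runs a longer conditional-expectation computation to show $\mathrm{TSF}(w,\varphi)=0$ forces $\mathbf f_S^\varphi=\mathbf f_T^\varphi$, and finally invokes $\mathcal F_C\circ\psi\subset\mathcal F_C$ to extract $\mathbf f_S^\psi=\mathbf f_T^\psi$ from pullback critics. Your route is shorter and reuses Proposition~\ref{tight} as a black box; the paper's is self-contained at the level of critic classes. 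One remark: your closing paragraph misidentifies where the hypotheses $\mathcal F\circ\psi\subset\mathcal F$ and $\mathcal F_C\circ\psi\subset\mathcal F_C$ matter. In your reduction they are never actually invoked --- Proposition~\ref{tight} rests only on (A1)--(A4), and fiber-wise marginalization plus the tower property then force the $\mathcal Z'$-level conclusions with no appeal to pullback critics --- so under your argument these containments are superfluous rather than ``the main obstacle.'' They are, however, exactly the engine of the paper's direct proof, which derives the $\mathcal Z'$-level identities from restricted critics rather than from the pointwise identity on $\mathcal Z$.
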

This proposition shows that the design of $w$ has a significant impact on the property of domain invariance of representations. Furthermore, both labelling functions are conserved. In the rest of  the paper we focus on weighting in the representation space which consists in:
\begin{equation}
    w(z) = \frac{p_T(z)}{p_S(z)}
\end{equation}
Since it does not leverage any transformations of representations $\psi$, we refer to this approach as a weak inductive design of weights. It is worth noting this inductive design controls naturally the invariance error \textit{i.e.} $\mathrm{INV}(w, \varphi) =0 $.

\section{Towards Robust Domain Adaptation}
\label{algorithm}

In this section, we expose a new learning procedure which relies on weak inductive design of both weights and the classifier. This procedure focuses on the transferability error since the inductive design of weights naturally controls the invariance error. Our learning procedure is then a bi-level optimization problem, named RUDA (Robust UDA): 
\begin{equation} \displaystyle{
\left \{ 
\begin{array}{rl}
 \varphi^\star & = \displaystyle{\arg \min_{\varphi\in\Phi} } ~~ \varepsilon_{w(\varphi) \cdot S} (g_{w\cdot S} \varphi) + \lambda \cdot \widehat{\mathrm{TSF}}(w,\varphi, g_{w \cdot S} ) \\
     & \mbox{such that } w(\varphi) = \displaystyle{\arg \min_w} ~ \mathrm{INV}(w, \varphi) \\
     
\end{array}\right.}
    \tag{RUDA}
\end{equation}
\noindent where $\lambda >0$ is a trade-off parameter. Two discriminators are involved here. The former is a domain discriminator $d$ trained to map 1 for source representations and 0 for target representations by minimizing a domain adversarial loss: 
\begin{equation}
    \mathcal L_{\mathrm{INV}}(\theta_d |\theta_\varphi) =  \frac{1}{n_S} \sum_{i=1}^{n_S}- \log(d(z_{S,i})) + \frac{1}{n_T}\sum_{i=1}^{n_T} - \log(1-d(z_{T,i}))
\end{equation}
where $\theta_d$ and $\theta_\varphi$ are respectively the parameters of $d$ and $\varphi$,  and $n_S$ and $n_T$ are respectively the number of samples in the source and target domains. Setting weights $w_d(z) := (1-d(z)) / d(z)$ ensures that $\mathrm{INV}(w,\varphi)$ is minimal (See Appendix \ref{training_details:relaxed_weights}). The latter, noted $\mathbf d$, maps representations to the label space $[0,1]^C$ in order to obtain a proxy of the transferability error expressed as a domain adversarial objective (See Appendix \ref{tv_to_da}):
\begin{align}
\mathcal L_{\mathrm{TSF}}(\theta_{\varphi},\theta_{\mathbf d} |\theta_d, \theta_g) =   \inf_{\mathbf d} & \left \{ \frac{1}{n_S} \sum_{i=1}^{n_S} -  w_d(z_{S,i}) g(z_{S,i}) \cdot \log(\mathbf d(z_{S,i})) \right.  \notag \\
&~~~~  + \left. \frac{1}{n_T}\sum_{i=1}^{n_T} - g(z_{T,i}) \cdot \log(1-\mathbf d (z_{T,i})) \right \}
\end{align}
where $\theta_{\mathbf d}$ and $\theta_g$ are respectively parameters of $\mathbf d$ and $g$. Furthermore, we use the cross-entropy loss in the source weighted domain for learning $\theta_g$:
\begin{equation}
     \mathcal L_{c}(\theta_g, \theta_\varphi |\theta_d)  = \frac{1}{n_S}\sum_{i=1}^{n_S} - w_d (z_{S,i}) y_{S,i} \cdot \log (g(z_{S,i}))
\end{equation} 
Finally, the optimization is then expressed as follows:
\begin{equation}
\left \{
    \begin{array}{rl}
        \theta_\varphi^\star & = \arg \min_{\theta_\varphi} \mathcal L_{c}(\theta_g, \theta_\varphi |\theta_d) + \lambda \cdot \mathcal L_{\mathrm{TSF}}(\theta_{\varphi},\theta_{\mathbf d} |\theta_d, \theta_g)  \\
         \theta_g  &= \arg \min_{\theta_g}  \mathcal L_{c}(\theta_g,  \theta_\varphi | \theta_d) \\
         \theta_d  &= \arg \min_{\theta_d}  \mathcal L_{\mathrm{INV}}(\theta_d |\theta_\varphi) 
    \end{array}
\right.
\end{equation}
Losses are minimized by stochastic gradient descent (SGD) where in practice $\inf_{d}$ and $\inf_{\mathbf d}$ are gradient reversal layers \cite{ganin2015unsupervised}. The trade-off parameter $\lambda$ is pushed from 0 to $1$ during training. We provide an implementation in \texttt{Pytorch} \cite{paszke2019pytorch} based on \cite{long2018conditional}. The algorithm procedure is described in Appendix \ref{procedure_detailed}.

\section{Experiments}
\label{experiments}

\subsection{Setup}
\paragraph{Datasets.} We investigate two digits datasets: \textbf{MNIST} and \textbf{USPS} transfer tasks MNIST to USPS (M$\to$U) and USPS to MNIST (U$\to$M). We used standard train / test split for training and evaluation. \textbf{Office-31} is a dataset of images containing objects spread among 31 classes captured from different domains: \textbf{Amazon}, \textbf{DSLR} camera and a \textbf{Webcam} camera. \textbf{DSLR} and \textbf{Webcam} are very similar domains but images differ by their exposition and their quality. 


\paragraph{Label shifted datasets.} We stress-test our approach by investigating more challenging settings where the label distribution shifts strongly across domains. For the \textbf{Digits} dataset, we explore a wide variety of shifts by keeping only $5\%$, $10\%$, $15\%$ and $20\%$ of digits between 0 and 5  of the original dataset (refered as $\% \times [0\sim 5]$). We have investigated the tasks U$\to$M and M$\to$U. For the \textbf{Office-31} dataset, we explore the shift where the object spread in classes 16 to 31 are duplicated 5 times (refered as $5\times [16 \sim 31]$). Shifting distribution in the source domain rather than the target domain allows to better appreciate the drop in performances in the target domain compared to the case where the source domain is not shifted.  

\paragraph{Comparison with the state-of-the-art.} For all tasks, we report results from DANN \cite{ganin2015unsupervised} and CDAN \cite{long2018conditional}. To study the effect of weights, we name our method RUDA when weights are set to 1, and RUDA$_w$ when weights are used. For the non-shifted datasets, we report a weighted version of CDAN (entropy conditioning CDAN+E \cite{long2018conditional}). For the label shifted datasets, we report IWAN \cite{zhang2018importance}, a weighted DANN where weights are learned from a second discriminator, and CDAN$_w$ a weighted CDAN where weihghts are added in the same setting than RUDA$_w$.

\paragraph{Training details.} Models are trained during 20.000 iterations of SGD. We report end of training accuracy in the target domain averaged on five random seeds. The model for the \textbf{Office-31} dataset uses a pretrained ResNet-50 \cite{he2016deep}. We used the same hyper-parameters than \cite{long2018conditional} which were selected by importance weighted  cross-validation \cite{sugiyama2007covariate}. The trade-off parameters $\lambda$ is smoothly pushed from 0 to 1 as detailed in \cite{long2018conditional}. To prevent from noisy weighting in early learning, we used weight relaxation: based on the sigmoid output of discriminator $d(z) = \sigma(\tilde d(z))$, we used $d_\tau(z) = \sigma (\tilde d(z/\tau))$ and weights $w(z) = (1 - d_\tau(z))/ d_{\tau}(z)$. $\tau$ is decreased to 1 during training: $\tau = \tau_{\min} + 2(\tau_{\max} - \tau_{\min}) /(1+ \exp(-\alpha p))$ where $\tau_{\max}= 5, \tau_{\min} = 1$, $p \in [0,1]$ is the training progress. In all experiments, $\alpha$ is set to $5$ (except for $5\%\times [0\sim5]$ where $\alpha=15$, see Appendix \ref{ablation_alpha} for more details).

\subsection{Results}
\paragraph{Unshifted datasets.} On both \textbf{Office-31} (Table \ref{table:Office31}) and \textbf{Digits} (Table \ref{table:Digits}), RUDA performs similarly than CDAN. Simply performing the scalar product allows to achieve results obtained by multi-linear conditioning \cite{long2018conditional}. This presents a second advantage: when domains exhibit a large number of classes,  \textit{e.g.} in \textbf{Office-Home} (See Appendix), our approach does not need to leverage a random layer. It is interesting to observe that we achieve performances close to CDAN+E on $\textbf{Office-31}$ while we do not use entropy conditioning. However, we observe a substantial drop in performance when adding weights, but still get results comparable with CDAN in \textbf{Office-31}. This is a deceiptive result since those datasets naturally exhibit label shift; one can expect to improve the baselines using weights. We did not observe this phenomenon on standard benchmarks. 

\begin{table}[t!] 
\centering
\caption{Accuracy ($\%$) on the \textbf{Office-31} dataset.}
\label{table:Office31}
\scriptsize{
\begin{tabular}{|c |c||c|c|c|c|c|c||c|}
\hline
 & Method & A$\overset{}{\to}$W & W$\overset{}{\to}$A & A$\overset{}{\to}$D & D$\overset{}{\to}$A & D$\overset{}{\to}$W & W$\overset{}{\to}$D & Avg  \\
\hline
~~ \multirow{6}{*}{\rotatebox{90}{Standard}} ~~ & ResNet-50 & 68.4 $\pm$ 0.2 & 60.7 $\pm$ 0.3 & 68.9 $\pm$ 0.2 & 62.5 $\pm$ 0.3 & 96.7 $\pm$ 0.1 & 99.3 $\pm$ 0.1  & 76.1  \\ 
& DANN & 82.0 $\pm$ 0.4 & 67.4 $\pm$ 0.5 & 79.7 $\pm$ 0.4 & 68.2 $\pm$ 0.4 & 96.9 $\pm 0.2 $ & 99.1 $\pm 0.1$ & 82.2  \\
& CDAN & 93.1 $\pm$ 0.2 & 68.0 $\pm$ 0.4 & 89.8 $\pm$ 0.3  & 70.1 $\pm$ 0.4 & 98.2 $\pm$ 0.2 & 100. $\pm$ 0.0  & 86.6 \\
& CDAN+E & 94.1 $\pm$ 0.1 & 69.3 $\pm$ 0.4 & \textbf{92.9 $\pm$ 0.2}  & \textbf{71.0 $\pm$ 0.3} & \textbf{98.6 $\pm$ 0.1} & \textbf{100. $\pm$ 0.0} & \textbf{87.7} \\
& RUDA &  \textbf{94.3 $\pm$ 0.3} & \textbf{70.7 $\pm$ 0.3} & 92.1 $\pm$ 0.3 & 70.7 $\pm$ 0.1 & 98.5 $\pm$ 0.1 &  100. $\pm$ 0.0 &  87.6  \\ 
& RUDA$_w$ & 92.0 $\pm$ 0.3 & 67.9 $\pm$ 0.3 & 91.1 $\pm$ 0.3 & 70.2 $\pm$ 0.2 & 98.6 $\pm$ 0.1 & 100. $\pm$ 0.0  &  86.6\\ 
\hline

~~ \multirow{6}{*}{\rotatebox{90}{$5\times [16 \sim 31]$ ~}} ~~  & ResNet-50 & 72.4 $\pm$ 0.7 & 59.5 $\pm$ 0.1 & 79.0 $\pm$ 0.1 & 61.6 $\pm$ 0.3 & 97.8 $\pm$ 0.1 & 99.3 $\pm$ 0.1 &  78.3 \\ 
& DANN & 67.5 $\pm$ 0.1 & 52.1 $\pm$ 0.8 & 69.7 $\pm$ 0.0 & 51.5 $\pm$ 0.1 & 89.9 $\pm 0.1 $ & 75.9 $\pm 0.2$ & 67.8  \\
& CDAN & 82.5 $\pm$ 0.4 & 62.9 $\pm$ 0.6 & \underline{81.4 $\pm$ 0.5}  & \underline{65.5} $\pm$ 0.5 & \underline{98.5 $\pm$ 0.3} & \underline{99.8 $\pm$ 0.0}  & 81.6 \\
& RUDA & \underline{85.4 $\pm$ 0.8} & \underline{66.7 $\pm$ 0.5} & 81.3 $\pm$ 0.3  & 64.0 $\pm$ 0.5 & 98.4 $\pm$ 0.2 &  99.5 $\pm$ 0.1  & \underline{82.1}\\ 
& IWAN & 72.4 $\pm$ 0.4 & 54.8 $\pm$ 0.8 & 75.0  $\pm$ 0.3 & 54.8 $\pm$ 1.3 & 97.0 $\pm 0.0 $ & 95.8 $\pm 0.6$ & 75.0  \\

& CDAN$_w$ & 81.5 $\pm 0.5$ & 64.5 $\pm$ 0.4 & 80.7 $\pm$ 1.0& 65 $\pm$ 0.8  &  \textbf{98.7 $\pm$ 0.2} &  99.9 $\pm$ 0.1  &  81.8 \\
& RUDA$_w$ & \textbf{87.4 $\pm$ 0.2} & \textbf{68.3 $\pm$ 0.3}  & \textbf{82.9} $\pm$ 0.4 & \textbf{68.8 $\pm$ 0.2} & \textbf{98.7} $\pm$ 0.1 & \textbf{100.} $\pm$ 0.0 &  \textbf{83.8} \\
\hline 
\end{tabular}}

\end{table}

\begin{table}[b!]
\centering
\caption{Accuracy ($\%$) on the \textbf{Digits} dataset.}
\label{table:Digits}
\scriptsize{
\begin{tabular}{|c c||ccccc|c||ccccc|c|| c | }
\hline
Method &   & \multicolumn{5}{c}{U$\to$ M } & & \multicolumn{5}{c}{M$\to$U }  & & \\
 & Shift of $[0\sim 5]$ & 5\% & 10\% & 15\% & 20\% & 100\% &  Avg &  5\% & 10\% & 15\% & 20\% & 100\% & Avg & Avg \\
\hline
DANN &  & 41.7 & 51.0 & 59.6 & 69.0 & 94.5 & 63.2 & 34.5 & 51.0  & 59.6 & 63.6 & 90.7 & 59.9 & 63.2  \\
CDAN & & \underline{50.7} & \underline{62.2} & \underline{82.9} & 82.8 & \underline{\textbf{96.9}} &   \underline{75.1} & 32.0 & \underline{69.7} & \underline{78.9} & \underline{81.3} & \underline{\textbf{93.9}} & \underline{71.2} & \underline{73.2} \\  
RUDA & & 44.4 & 58.4 & 80.0 & \underline{84.0} & 95.5 &  72.5  & \underline{34.9} & 59.0 & 76.1 & 78.8 & 93.3 &  68.4 & 70.5\\
IWAN & & 73.7 & 74.4 & 78.4 & 77.5 & 95.7 & 79.9 & 72.2 & 82.0 & 84.3 & 86.0 & 92.0 & 83.3 & 81.6 \\
CDAN$_w$ & & 68.3 & 78.8 & 84.9 & \textbf{88.4} & 96.6 &  83.4  & 69.4 &  80.0 & 83.5 & 87.8 & 93.7 & 82.9 &  83.2 \\ 
RUDA$_w$ & & \textbf{78.7} & \textbf{82.8} & \textbf{86.0} & 86.9 & 93.9 & \textbf{85.7}   &  \textbf{78.7} & \textbf{87.9} & \textbf{88.2} & \textbf{89.3} & 92.5 & \textbf{87.3}&  \textbf{86.5} \\
\hline 
\end{tabular}}
\end{table}

\paragraph{Label shifted datasets.} We stress-tested our approach by applying strong label shifts to the datasets. First, we observe a drop in performance for all methods based on invariant representations compared with the situation without label shift. This is consistent with works that warn the pitfall of domain invariant representations in presence of label shift \cite{johansson2019support,zhao2019learning}. RUDA and CDAN perform similarly even in this setting. It is interesting to note that the weights improve significantly RUDA results  (+1.7\% on \textbf{Office-31} and +16.0\% on \textbf{Digits} both in average) while CDAN seems less impacted by them (+0.2\% on \textbf{Office-31} and +10.0\% on \textbf{Digits} both in average).

\paragraph{Should we use weights?} To observe a significant benefit of weights, we had to explore situations with strong label shift \textit{e.g.} $5\%$ and $10\% \times [0 \sim 5]$ for the \textbf{Digits} dataset. Apart from this cases, weights bring small gain (\textit{e.g.} + 1.7\% on \textbf{Office-31} for RUDA) or even degrade marginally adaptation. Understanding why RUDA and CDAN are able to address small label shift, without weights, is of great interest for the development of more robust UDA.

\section{Related work}
This paper makes several contributions, both in terms of theory and algorithm. Concerning theory, our bound provides a risk suitable for domain adversarial learning with weighting strategies. Existing theories for non-overlapping supports \cite{ben2010theory,mansour2009domain} and importance sampling \cite{cortes2010learning,quionero2009dataset} do not explore the role of representations neither the aspect of adversarial learning. In \cite{ben2007analysis}, analysis of representation is conducted and connections with our work is discussed in the paper. The work \cite{johansson2019support} is close to ours and introduces a distance which measures support overlap between source and target distributions under covariate shift. Our analysis does not rely on such assumption, its range of application is broader. 

Concerning algorithms,  the covariate shift adaptation has been well-studied in the literature \cite{huang2007correcting,gretton2009covariate,sugiyama2007covariate}. Importance sampling to address label shift has also been investigated \cite{storkey2009training}, notably  with kernel mean matching \cite{zhang2013domain} and Optimal Transport \cite{redko2018optimal}. Recently, a scheme for estimating labels distribution ratio with consistency guarantee has been proposed \cite{lipton2018detecting}. Learning domain invariant representations has also been investigated in the fold of \cite{ganin2015unsupervised,long2015learning} and mainly differs by the metric chosen for comparing distribution of representations. For instance, metrics are domain adversarial (Jensen divergence) \cite{ganin2015unsupervised,long2018conditional}, IPM based such as MMD \cite{long2015learning,long2017deep} or Wasserstein \cite{bhushan2018deepjdot,shen2018wasserstein}. Our work provides a new theoretical support for these methods since our analysis is valid for any IPM.

Using both weights and representations is also an active topic, namely for Partial Domain Adaptation (PADA) \cite{cao2018partial}, when target classes are strict subset of the source classes, or Universal Domain Adaptation \cite{you2019universal}, when new classes may appear in the target domain. \cite{cao2018partial} uses an heuristic based on predicted labels for re-weighting representations. However, it assumes they have a good classifier at first in order to obtain cycle consistent weights. \cite{zhang2018importance} uses a second discriminator for learning weights, which is similar to \cite{cao2018unsupervised}. Applying our framework to Partial DA and Universal DA is an interesting future direction. Our work shares strong connections  with \cite{combes2020domain} (authors were not aware of this work during the elaboration of this paper) which uses consistent estimation of true labels distribution from \cite{lipton2018detecting}. We suggest a very similar empirical evaluation and we also investigate the effect of weights on CDAN loss \cite{long2018conditional} with a different weighting scheme since our approach computes weights in the representation space. All these works rely on an assumption at some level, \textit{e.g.} \textit{Generalized Label Shift} in \cite{combes2020domain}, when designing weighting strategies.  Our discussion on the role of inductive design of weights may provide a new theoretical support for these approaches.

\section{Conclusion}

The present work introduces a new bound of the target risk which unifies weights and representations in UDA. We conduct a theoretical analysis of the role of inductive bias when designing both weights and the classifier. In light of this analysis, we propose a new learning procedure which leverages two weak inductive biases, respectively on weights and the classifier. To the best of our knowledge, this procedure is original while being close to straightforward hybridization of existing methods. We illustrate its effectiveness on two benchmarks. The empirical analysis shows that weak inductive bias can make adaptation more robust even when stressed by strong label shift between source and target domains.  This work leaves room for in-depth study of stronger inductive bias by providing both theoretical and empirical foundations.

\section*{Acknowledgements}
Victor Bouvier is funded by Sidetrade and ANRT (France) through a CIFRE collaboration with CentraleSupélec. Authors thank the anonymous reviewers for their insightful comments for improving the quality of the paper. This work was performed using HPC resources from the “Mésocentre” computing center of CentraleSupélec and École Normale Supérieure Paris-Saclay supported by CNRS and Région Île-de-France (\url{http://mesocentre.centralesupelec.fr/}).

\bibliographystyle{splncs04} 
\bibliography{main}

\newpage
\appendix 
 
\section{Proofs}
\label{proof}
We provide full proof of bounds and propositions presented in the paper. 

\subsection{Proof of bound \ref{TVBound_no_w}}
\label{proof:TVBound_no_w}
We give a proof of bound \ref{TVBound_no_w} which states: 
\begin{equation}
    \varepsilon_T(g\varphi) \leq \varepsilon_{S}(g\varphi)  + 6\cdot\mathrm{INV}(\varphi) +  
    2\cdot \mathrm{TSF}(\varphi)  + \varepsilon_T(\mathbf f_T\varphi)
\end{equation}
First, we prove the following lemma:
\begin{bound}[Revisit of theorem \ref{ben_david}]
\label{lemma}
$\forall g \in \mathcal G$:
\begin{equation}
    \varepsilon_T(g\varphi) \leq \varepsilon_S(g\varphi) + d_{\mathcal F_C}(\varphi) + \varepsilon_T(\mathbf f_S\varphi, \mathbf f_T\varphi) + \varepsilon_T(\mathbf f_T\varphi)
\end{equation}
\end{bound}

\begin{proof}
This is simply obtained using triangular inequalites:
\begin{align*}
    \varepsilon_T(g\varphi) &\leq \varepsilon_T(\mathbf f_T\varphi) + \varepsilon_T(g\varphi, \mathbf f_T\varphi) \\
    & \leq  \varepsilon_T(\mathbf f_T\varphi) + \varepsilon_T(g\varphi,\mathbf f_S\varphi) +  \varepsilon_T(\mathbf f_S\varphi,\mathbf f_T\varphi) 
\end{align*}
Now using (A3) ($\mathbf f_S \in \mathcal F_C$) :

\begin{equation}
    \left | \varepsilon_T(g\varphi,\mathbf f_S\varphi) - \varepsilon_S(g \varphi, \mathbf f_S\varphi) \right | \leq \sup_{\mathbf f \in \mathcal F_C} \left | \varepsilon_T(g\varphi,\mathbf f\varphi) - \varepsilon_S(g\varphi , \mathbf f\varphi) \right | = d_{\mathcal F_C}(\varphi) 
   \end{equation}
which shows that:   $ \varepsilon_T(g\varphi) \leq \varepsilon_S(g\varphi, \mathbf f_S) + d_{\mathcal F_C}(\varphi) + \varepsilon_T(\mathbf f_S\varphi,\mathbf f_T\varphi) + \varepsilon_T(\mathbf f_T\varphi)$ and we use the property of conditional expectation  $\varepsilon_S(g\varphi, \mathbf f_S\varphi) \leq \varepsilon_S(g\varphi)$. $\square$
\end{proof}

Second, we bound $d_{\mathcal F_C}(\varphi)$.

\begin{proposition} 
\label{d_c_inv}
$d_{\mathcal F_C}(\varphi) \leq 4 \cdot \mathrm{INV}(\varphi)$.
\end{proposition}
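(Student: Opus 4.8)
The plan is to reduce the pairwise disagreement defining $d_{\mathcal F_C}(\varphi)$ to a \emph{single} critic lying in $\mathcal F$, so that the gap between source and target expectations is controlled directly by $\mathrm{INV}(\varphi)$. Unfolding the definition in analogy with $d_{\mathcal G}$ of Bound~\ref{ben_david}, we have $d_{\mathcal F_C}(\varphi) = \sup_{\mathbf f,\mathbf f'\in\mathcal F_C}\left|\varepsilon_S(\mathbf f\varphi,\mathbf f'\varphi) - \varepsilon_T(\mathbf f\varphi,\mathbf f'\varphi)\right|$, and since $\ell$ is the $L^2$ loss each inner term reads $\varepsilon_D(\mathbf f\varphi,\mathbf f'\varphi) = \mathbb E_D\!\left[\|\mathbf f(Z)-\mathbf f'(Z)\|^2\right]$ with $Z = \varphi(X)$.

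First I would invoke (A1). Because $\mathcal F_C$ is symmetric and convex, the averaged critic $\mathbf f'' := \tfrac12(\mathbf f - \mathbf f')$ lies in $\mathcal F_C$ (write it as $\tfrac12\mathbf f + \tfrac12(-\mathbf f')$); this is precisely the $\sup_{\mathcal F_C^2}\!\to\!\sup_{\mathcal F_C}$ reduction announced after (A1). Since $\mathbf f - \mathbf f' = 2\mathbf f''$, the quadratic loss gives $\|\mathbf f(Z)-\mathbf f'(Z)\|^2 = 4\,\mathbf f''(Z)\cdot\mathbf f''(Z)$, hence $\varepsilon_D(\mathbf f\varphi,\mathbf f'\varphi) = 4\,\mathbb E_D[\mathbf f''(Z)\cdot\mathbf f''(Z)]$.

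Next I would apply (A2): the product $f := \mathbf f''\cdot\mathbf f''$ is an element of $\{\mathbf g\cdot\mathbf g' : \mathbf g,\mathbf g'\in\mathcal F_C\}\subset\mathcal F$, so it is a legitimate critic in $\mathcal F$. Therefore the inner difference equals $4\,\big(\mathbb E_S[f(Z)] - \mathbb E_T[f(Z)]\big)$. The absolute value is then absorbed using the symmetry half of (A1): for every $f\in\mathcal F$ we also have $-f\in\mathcal F$, so $\sup_{f\in\mathcal F}\big|\mathbb E_T[f(Z)] - \mathbb E_S[f(Z)]\big| = \sup_{f\in\mathcal F}\{\mathbb E_T[f(Z)]-\mathbb E_S[f(Z)]\} = \mathrm{INV}(\varphi)$. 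This yields $\left|\varepsilon_S(\mathbf f\varphi,\mathbf f'\varphi)-\varepsilon_T(\mathbf f\varphi,\mathbf f'\varphi)\right| \le 4\,\mathrm{INV}(\varphi)$ for every admissible pair $\mathbf f,\mathbf f'$, and taking the supremum over $\mathcal F_C^2$ gives $d_{\mathcal F_C}(\varphi)\le 4\,\mathrm{INV}(\varphi)$.

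There is no genuine obstacle here; the delicate points are purely bookkeeping. One must track the constant $4$ correctly — it arises because $\mathbf f - \mathbf f' = 2\mathbf f''$ and $\ell$ is quadratic — and one must justify that passing to the absolute value costs nothing, which is exactly where the \emph{symmetry} of $\mathcal F$ in (A1) enters, as opposed to the \emph{convexity} of $\mathcal F_C$ used in the averaging step. The only semantic subtlety is that $f=\mathbf f''\cdot\mathbf f''$ must be admitted as a critic in $\mathcal F$, which is granted verbatim by (A2).
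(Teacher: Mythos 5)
Your proof is correct and follows essentially the same route as the paper's own proof: the (A1) reduction to $\mathbf f'' = \tfrac12(\mathbf f - \mathbf f')$, the factor $4$ arising from the quadratic loss, and (A2) to admit $f = \mathbf f''\cdot\mathbf f''$ as a critic in $\mathcal F$. The only difference is that you spell out the symmetry argument absorbing the absolute value into $\mathrm{INV}(\varphi)$, a step the paper leaves implicit.
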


\begin{proof}
We remind  that $d_{\mathcal F_C}(\varphi) = \sup_{\mathbf f, \mathbf f' \in \mathcal F_C}| \mathbb E_S[ || \mathbf f\varphi(X) - \mathbf f'\varphi(X)||^2] - \mathbb E_T[ ||\mathbf f\varphi(X) - \mathbf f'\varphi(X)||^2] |$. Since (A1) ensures $\mathbf f'\in\mathcal F_C$, $-\mathbf f' \in \mathcal F_C$, then $\frac 1 2 (\mathbf f- \mathbf f') = \mathbf f'' \in \mathcal F_C$ and finally $d_{\mathcal F_C}(\varphi) \leq 4 \sup_{\mathbf f'' \in \mathcal F_C}| \mathbb E_S[ ||\mathbf f'' \varphi||^2] -  \mathbb E_T[ ||\mathbf f'' \varphi||^2] |$. Furthermore, (A2) ensures that $\{ ||\mathbf f''\varphi||^2 \} \subset \{f\varphi, f\in \mathcal F\}$ which leads finally to the announced result. $\square$
\end{proof}

Third, we bound $\varepsilon_T(\mathbf f_S\varphi, \mathbf f_T\varphi)$.

\begin{proposition} $\varepsilon_T(\mathbf f_S\varphi, \mathbf f_T\varphi) \leq 2\cdot \mathrm{INV}(\varphi) + 2\cdot \mathrm{TSF}(\varphi)$. \end{proposition}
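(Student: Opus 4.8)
The plan is to start from the definition $\varepsilon_T(\mathbf f_S\varphi, \mathbf f_T\varphi) = \mathbb E_T[\|\mathbf f_S(Z) - \mathbf f_T(Z)\|^2]$ and split the squared norm asymmetrically into one piece that the invariance error can control and one piece that the transferability error can control, in such a way that the leftover source terms cancel exactly. The engine of the whole argument is the defining property of the target conditional expectation, $\mathbb E_T[\mathbf h(Z)\cdot\mathbf f_T(Z)] = \mathbb E_T[\mathbf h(Z)\cdot Y]$ for any $Z$-measurable $\mathbf h$, which holds because $\mathbf f_T(z) = \mathbb E_T[Y|Z=z]$ (and its analogue under $S$).

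First I would write $\|\mathbf f_S - \mathbf f_T\|^2 = (\mathbf f_S - \mathbf f_T)\cdot\mathbf f_S - (\mathbf f_S - \mathbf f_T)\cdot\mathbf f_T$ and apply the tower identity above to the second summand with $\mathbf h = \mathbf f_S - \mathbf f_T$, giving the exact reformulation
\[
\varepsilon_T(\mathbf f_S\varphi, \mathbf f_T\varphi) = \mathbb E_T[(\mathbf f_S - \mathbf f_T)\cdot\mathbf f_S] - \mathbb E_T[Y\cdot(\mathbf f_S - \mathbf f_T)].
\]
The point of this split is that the first term is a pure functional of the representation, hence amenable to $\mathrm{INV}$, while the second retains the label coupling $Y\cdot(\cdot)$, hence amenable to $\mathrm{TSF}$.

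Next I would pass each target expectation to its source counterpart. For the first term set $\mathbf h = \tfrac12(\mathbf f_S - \mathbf f_T)$; by (A1) this lies in $\mathcal F_C$ and by (A2) the scalar function $\mathbf h\cdot\mathbf f_S$ lies in $\mathcal F$, so the definition of $\mathrm{INV}(\varphi)$ yields $\mathbb E_T[(\mathbf f_S - \mathbf f_T)\cdot\mathbf f_S]\le \mathbb E_S[(\mathbf f_S - \mathbf f_T)\cdot\mathbf f_S] + 2\,\mathrm{INV}(\varphi)$. For the second term, $\tfrac12(\mathbf f_T - \mathbf f_S)\in\mathcal F_C$ again by (A1), so the definition of $\mathrm{TSF}(\varphi)$ gives $-\mathbb E_T[Y\cdot(\mathbf f_S - \mathbf f_T)] = \mathbb E_T[Y\cdot(\mathbf f_T - \mathbf f_S)]\le \mathbb E_S[Y\cdot(\mathbf f_T - \mathbf f_S)] + 2\,\mathrm{TSF}(\varphi)$. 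In both cases the factor $2$ is exactly the price of rescaling by $\tfrac12$ to land inside the symmetric convex classes. Summing, I obtain $\varepsilon_T(\mathbf f_S\varphi, \mathbf f_T\varphi) \le R_S + 2\,\mathrm{INV}(\varphi) + 2\,\mathrm{TSF}(\varphi)$ with source remainder $R_S = \mathbb E_S[(\mathbf f_S - \mathbf f_T)\cdot\mathbf f_S] + \mathbb E_S[Y\cdot(\mathbf f_T - \mathbf f_S)]$.

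The final and decisive step is to show $R_S = 0$. Applying the source tower identity to $\mathbf h = \mathbf f_T - \mathbf f_S$ gives $\mathbb E_S[Y\cdot(\mathbf f_T - \mathbf f_S)] = \mathbb E_S[\mathbf f_S\cdot(\mathbf f_T - \mathbf f_S)] = -\mathbb E_S[(\mathbf f_S - \mathbf f_T)\cdot\mathbf f_S]$, so the two source terms are exact opposites and cancel. This is the part I expect to be the main obstacle — not the manipulations themselves but finding the grouping in which the residual vanishes identically rather than merely being nonpositive, since it is precisely this exact cancellation (powered by the source conditional-expectation identity) that pins down the clean constants $2\,\mathrm{INV}+2\,\mathrm{TSF}$; a naive symmetric expansion instead leaves a sign-indefinite source term. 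A secondary technical point to keep track of throughout is the repeated reliance on (A1)–(A2) to guarantee that the rescaled functions remain in the $[-1,1]$-valued classes $\mathcal F$ and $\mathcal F_C$ so that the two IPM definitions actually apply.
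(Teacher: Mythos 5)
Your proof is correct and follows essentially the same route as the paper's: the paper expands $\mathbb E_T[\|\mathbf f_T\varphi - \mathbf f_S\varphi\|^2]$ into the same two scalar-product terms and adds/subtracts $\mathbb E_S[\mathbf f_S \cdot (\mathbf f_T - \mathbf f_S)]$ --- which is precisely your cancelling source remainder $R_S$ --- then bounds the two resulting groups by $2\cdot\mathrm{TSF}(\varphi)$ and $2\cdot\mathrm{INV}(\varphi)$ using the same $\tfrac12$-rescaling under (A1)--(A3) and the same tower-property identification of $\mathbb E_D[Y\cdot \mathbf f(Z)]$ with $\mathbb E_D[\mathbf f_D(Z)\cdot \mathbf f(Z)]$. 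The only difference is bookkeeping: you bound each target expectation by its source counterpart and then verify the leftover source terms vanish, whereas the paper performs the exact add-and-subtract split up front.
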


\begin{proof}
We note $\Delta = \mathbf f_T - \mathbf f_S$ and we omit $\varphi$  for the ease of reading
\begin{align*}
 \varepsilon_T(\mathbf f_S,\mathbf f_T) & = \mathbb E_T[||\Delta||^2] & \\
 & =  \mathbb E_T[\mathbf f_T \cdot \Delta] - \mathbb E_T[\mathbf f_S \cdot \Delta]  \\
 & =  \left (\mathbb E_T[\mathbf f_T \cdot \Delta ] - \mathbb E_S[\mathbf f_S \cdot \Delta] \right )+  \left (\mathbb E_S[ \mathbf  f_S \cdot \Delta ] - \mathbb E_T[ \mathbf f_S \cdot \Delta] \right)
\end{align*}
Since $\mathbf f_T$ does not intervene in $\mathbb E_S[ \mathbf  f_S \cdot \Delta ] - \mathbb E_T[ \mathbf f_S \cdot \Delta] $, we show this term behaves similarly than $\mathrm{INV}(\varphi)$. First, 

\begin{align}
    \mathbb E_S[\mathbf f_S \cdot \Delta] - \mathbb E_T[\mathbf f_S \cdot \Delta ] & \leq 2 \sup_{\mathbf f \in \mathcal F_{C}} \mathbb E_S[\mathbf f_S \cdot \mathbf f] - \mathbb E_T[\mathbf f_S \cdot \mathbf f] \tag{Using (A1)}  \\
    & \leq 2 \sup_{\mathbf f, \mathbf f' \in \mathcal F_{C}} \mathbb E_S[\mathbf f' \cdot \mathbf f] - \mathbb E_T[\mathbf f' \cdot \mathbf f] \tag{Using (A3)}  \\
    & \leq 2  \sup_{f \in \mathcal F} \mathbb E_S [f] - \mathbb E_T[f] \tag{Using (A2)} \\
    & = 2\cdot \mathrm{INV}(\varphi)
\end{align}

Second, 
\begin{align}
    \mathbb E_T[\mathbf f_T \cdot \Delta] - \mathbb E_S[\mathbf f_S \cdot \Delta] \leq 2 \sup \mathbb E_T[\mathbf f_T \cdot \mathbf f] - \mathbb E_S[ \mathbf f_S \cdot \mathbf f ] = 2 \cdot \mathrm{TSF}(\varphi) \tag{Using (A1)}
\end{align}
which finishes the proof. $\square$
\end{proof}

Note that the fact $\mathbf f_S, \mathbf f_T \in \mathcal F_C$ is not of the utmost importance since we can bound: 
 \begin{equation}
     \varepsilon_T(g\varphi) \leq \varepsilon_S(g\varphi, \hat{\mathbf f}_S) + d_{\mathcal F_C}(\varphi) + \varepsilon_T( \hat{\mathbf f}_S, \hat{\mathbf f}_T) + \varepsilon_T( \hat{\mathbf f}_T)
 \end{equation} 
 where $ \hat{\mathbf f}_D = \arg \min_{\mathbf f \in \mathcal F_C} \varepsilon_D(\mathbf f)$. The only change emerges in the transferability error which becomes: 
 \begin{equation}
     \mathrm{TSF}(w, \varphi ) = \sup_{\mathbf f \in \mathcal F_C} \mathbb E_T[\hat{\mathbf f}_T \varphi \cdot \mathbf f \varphi] - \mathbb E_S[\hat{\mathbf f}_S \varphi  \cdot \mathbf f \varphi ]
 \end{equation}
 
 \subsection{Proof of the new invariance transferability trade-off}
 \label{proof:new_trade_off}
 
 \begin{proposition}
Let $\psi$ a representation which is a richer feature extractor than $\varphi$: $\mathcal F\circ\varphi \subset \mathcal F \circ \psi$ and $\mathcal F_C\circ\varphi \subset \mathcal F_C \circ \psi$. Then, $\varphi$ is more domain invariant than $\psi$:
\begin{equation}
    \mathrm{INV}(\varphi) \leq \mathrm{INV}(\psi) \mbox{ while } \varepsilon_T (f_T^\psi \psi) \leq \varepsilon_T  (f_T^\varphi  \varphi)
\end{equation}
where $f_T^\varphi(z) = \mathbb E_T[Y|\varphi(X) = z]$ and $f_T^\psi(z) = \mathbb E_T[Y|\psi(X)=z]$.
\end{proposition}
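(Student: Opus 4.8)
The plan is to treat the two inequalities separately, since they concern different objects: the first is a purely variational statement about IPMs, whereas the second is an $L^2$-projection statement about conditional expectations.

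For the invariance inequality, I would first rewrite $\mathrm{INV}$ as a supremum over a composite function class. By definition, with $Z=\varphi(X)$,
\[
\mathrm{INV}(\varphi) = \sup_{f\in\mathcal F}\{\mathbb E_T[f(\varphi(X))] - \mathbb E_S[f(\varphi(X))]\} = \sup_{h\in\mathcal F\circ\varphi}\{\mathbb E_T[h(X)] - \mathbb E_S[h(X)]\},
\]
and analogously for $\psi$. The richness hypothesis $\mathcal F\circ\varphi \subset \mathcal F\circ\psi$ then says that the supremum defining $\mathrm{INV}(\psi)$ ranges over a larger set than the one defining $\mathrm{INV}(\varphi)$. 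Since a supremum is monotone under set inclusion, $\mathrm{INV}(\varphi)\leq\mathrm{INV}(\psi)$ follows immediately, with no computation required.

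For the target-error inequality, I would exploit the fact that $f_T^D D(x) = \mathbb E_T[Y\mid D(X)=D(x)]$ is exactly the conditional expectation of $Y$ given the representation, hence the $L^2(p_T)$-orthogonal projection of $Y$ onto the space of square-integrable $\sigma(\psi(X))$-measurable (resp. $\sigma(\varphi(X))$-measurable) functions. The key observation is that $f_T^\varphi\varphi$ lies in the candidate set for the $\psi$-projection: by assumption (A3) we have $f_T^\varphi\in\mathcal F_C$, so $f_T^\varphi\varphi \in \mathcal F_C\circ\varphi$, and the richness hypothesis $\mathcal F_C\circ\varphi \subset \mathcal F_C\circ\psi$ produces some $\mathbf g\in\mathcal F_C$ with $f_T^\varphi\varphi = \mathbf g\psi$; in particular $f_T^\varphi\varphi$ is a function of $\psi(X)$. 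Since $f_T^\psi\psi = \mathbb E_T[Y\mid\psi(X)]$ minimizes $\mathbb E_T[\|Y-V\|^2]$ over all $\sigma(\psi(X))$-measurable $V$, taking $V=f_T^\varphi\varphi=\mathbf g\psi$ yields
\[
\varepsilon_T(f_T^\psi\psi) = \mathbb E_T[\|Y-\mathbb E_T[Y\mid\psi(X)]\|^2] \leq \mathbb E_T[\|Y-f_T^\varphi\varphi(X)\|^2] = \varepsilon_T(f_T^\varphi\varphi),
\]
as desired.

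The main obstacle is making this second step rigorous at the level of measurability rather than merely at the level of function classes: I must argue that membership in $\mathcal F_C\circ\psi$ genuinely implies $\sigma(\psi(X))$-measurability, so that $f_T^\varphi\varphi$ is an admissible competitor in the projection minimization. Once this bridge between the stated inclusion hypotheses and conditional-expectation optimality is in place, the inequality reduces to the defining variance-minimization property of the conditional expectation, and nothing further is needed.
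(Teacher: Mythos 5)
Your proof is correct and follows essentially the same route as the paper's: the first inequality is the monotonicity of suprema under the inclusion $\mathcal F\circ\varphi\subset\mathcal F\circ\psi$, and the second rests on the $L^2$-projection property of the conditional expectation combined with (A3) and the inclusion $\mathcal F_C\circ\varphi\subset\mathcal F_C\circ\psi$. The only immaterial difference is that you plug $\mathbf f_T^\varphi\varphi$ in directly as a competitor in the projection problem defining $\mathbf f_T^\psi\psi$, whereas the paper phrases the same fact as a comparison of infima, $\varepsilon_T(\mathbf f_T^\psi\psi)=\inf_{\mathbf f\in\mathcal F_C}\varepsilon_T(\mathbf f\psi)\leq\inf_{\mathbf f\in\mathcal F_C}\varepsilon_T(\mathbf f\varphi)\leq\varepsilon_T(\mathbf f_T^\varphi\varphi)$.
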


\begin{proof}
First, $ \mathrm{INV}(\varphi) \leq \mathrm{INV}(\psi) $ a simple property of the supremum. The definition of the conditional expectation leads to $\varepsilon_T(\mathbf f_T^\psi\psi) = \inf_{f \in \mathcal F_m} \varepsilon_T(f\psi)$ where $\mathcal F_m$ is the set of measurable functions. Since (A3) ensures that $\mathbf f_T^\psi \in \mathcal F_C$ then $\varepsilon_T(\mathbf f_T^\psi\psi) = \inf_{\mathbf f \in \mathcal F_C}\varepsilon_T(\mathbf f \psi) $. The rest is simply the use of the property of infremum. $\square$
\end{proof}

\subsection{Proof of the tightness of bound \ref{TVBound_no_w}}
\label{proof:tightness_TVBound_no_w}
\begin{proposition}
$\mathrm{INV}(\varphi) + \mathrm{TSF}(\varphi) = 0$ if and only if $
    p_S(y,z) = p_T(y,z)$.
\end{proposition}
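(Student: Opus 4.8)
The plan is to first reduce the statement to showing that both terms vanish individually. By (A1), $\mathcal F$ and $\mathcal F_C$ are symmetric and convex, so $0$ belongs to each class (as $\tfrac12(f + (-f))$), whence $\mathrm{INV}(\varphi) \geq 0$ and $\mathrm{TSF}(\varphi) \geq 0$. Therefore $\mathrm{INV}(\varphi) + \mathrm{TSF}(\varphi) = 0$ if and only if $\mathrm{INV}(\varphi) = 0$ and $\mathrm{TSF}(\varphi) = 0$ simultaneously, and I will prove the biconditional in this form.

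The reverse implication is immediate: if $p_S(y,z) = p_T(y,z)$, then the marginals $p_S(z)$ and $p_T(z)$ coincide, so $\mathbb E_S[f(Z)] = \mathbb E_T[f(Z)]$ for every $f$ and $\mathrm{INV}(\varphi) = 0$; likewise $Y \cdot \mathbf f(Z)$ is a function of $(Y,Z)$, so its expectation agrees across domains for every $\mathbf f \in \mathcal F_C$, giving $\mathrm{TSF}(\varphi) = 0$.

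For the forward implication I would first use $\mathrm{INV}(\varphi) = 0$ together with (A4) to conclude $p_S(z) = p_T(z)$; write $p$ for this common marginal. Next, exploiting symmetry (A1), $\mathrm{TSF}(\varphi) = 0$ upgrades the supremal inequality to the equality $\mathbb E_T[Y \cdot \mathbf f(Z)] = \mathbb E_S[Y \cdot \mathbf f(Z)]$ for every $\mathbf f \in \mathcal F_C$. Applying the tower property, $\mathbb E_D[Y \cdot \mathbf f(Z)] = \mathbb E_D[\mathbf f_D(Z) \cdot \mathbf f(Z)]$ where $\mathbf f_D(z) = \mathbb E_D[Y \mid Z = z]$, and since the marginals agree this rearranges to $\mathbb E_p[(\mathbf f_T(Z) - \mathbf f_S(Z)) \cdot \mathbf f(Z)] = 0$ for all $\mathbf f \in \mathcal F_C$.

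The decisive step is to choose the test function $\mathbf f = \tfrac12(\mathbf f_T - \mathbf f_S)$: by (A3) both $\mathbf f_S$ and $\mathbf f_T$ lie in $\mathcal F_C$, and by (A1) so does this symmetric convex combination. Substituting it yields $\tfrac12 \mathbb E_p[\|\mathbf f_T(Z) - \mathbf f_S(Z)\|^2] = 0$, forcing $\mathbf f_S = \mathbf f_T$ $p$-almost surely. Because $Y$ is one-hot, the conditional law $p_D(y \mid z)$ is fully parametrized by $\mathbf f_D(z)$, so equality of the marginals together with $\mathbf f_S = \mathbf f_T$ a.s. gives $p_S(y,z) = p_T(y,z)$. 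I expect the main obstacle to be this last substitution: one must verify that $\tfrac12(\mathbf f_T - \mathbf f_S)$ is a legitimate critic, which is exactly where the combination of (A3) and the symmetry/convexity in (A1) is essential, and then translate the $L^2$-identity on conditional expectations back into equality of the full joint distributions via the one-hot structure of the labels.
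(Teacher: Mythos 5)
Your proof is correct and follows essentially the same route as the paper's: use (A4) with $\mathrm{INV}(\varphi)=0$ to equate the marginals $p_S(z)=p_T(z)$, then substitute the critic $\mathbf f = \tfrac12(\mathbf f_T-\mathbf f_S)$ (admissible by (A1) and (A3)) into the vanishing $\mathrm{TSF}(\varphi)$ to force $\mathbb E_p[\|\mathbf f_T(Z)-\mathbf f_S(Z)\|^2]=0$, hence equality of conditionals and of the joint distributions. You are in fact slightly more explicit than the paper, which leaves implicit both the non-negativity of the two terms and the final passage from $\mathbf f_S=\mathbf f_T$ almost surely to $p_S(y,z)=p_T(y,z)$.
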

\begin{proof}
First, $\mathrm{INV}(\varphi) = 0$ implies $p_T(z) = p_S(z)$ which is a direct application of (A4). Now $\mathrm{TSF}(\varphi) = \sup_{\mathbf f \in \mathcal F_C} \mathbb E_S[\mathbf f_S(Z) \cdot \mathbf f(Z) ]- \mathbb E_T[\mathbf f_T(Z) \cdot \mathbf f(Z) ] =\sup_{\mathbf f \in \mathcal F_C} \mathbb E_S[\mathbf f_S(Z) \cdot \mathbf f(Z) ]- \mathbb E_S[\mathbf f_T(Z) \cdot \mathbf f(Z) ] = \sup_{\mathbf f \in \mathcal F_C} \mathbb E_S[(\mathbf f_S - \mathbf f_T )(Z) \cdot \mathbf f (Z)]$. For the particular choice of $\mathbf f = \frac 1 2 (\mathbf f_S - \mathbf f_T)$ leads to $\mathbb E_S[||\mathbf f_S - \mathbf f_T ||^2]$ then $\mathbf f_s = \mathbf f_T$, $p_S$ almost surely. All combined leads to $p_S(y,z) = p_T(y,z)$. The converse is trivial. Note that $\mathrm{TSF}(\varphi)= 0$ is enough to show $p_S(z) = p_T(z)$ by choosing $\mathbf f(z) = (f(z) ,..., f(z))$ ($C$ times $f(z)$) and $Y\cdot \mathbf f(Z) = f(Z)$ then $\mathrm{TSF}(\varphi)\geq \sup_{f\in\mathcal F} \mathbb E_S[f(Z)] - \mathbb E_T[f(Z)]$. $\square$
\end{proof} 

\subsection{Proof of the tightness of bound \ref{TV_bound_with_w}}
\label{proof:tightness_TV_bound_with_w}
\begin{proposition}
$\mathrm{INV}(w,\varphi) + \mathrm{TSF}(w,\varphi) = 0$ if and only if $
    w(z) = \frac{p_T(z)}{p_S(z)}$ and $\mathbb E_T[Y|Z=z] = \mathbb E_S[Y|Z=z]$.
\end{proposition}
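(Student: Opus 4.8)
The plan is to mimic the proof of Proposition \ref{tight_no_w}, now carrying the weight $w$ through the computation. First I would observe that both terms are nonnegative: since $\mathcal F$ and $\mathcal F_C$ are symmetric and convex by (A1), the constant map $0$ lies in each class, so both suprema are at least $0$. Hence $\mathrm{INV}(w,\varphi) + \mathrm{TSF}(w,\varphi) = 0$ forces $\mathrm{INV}(w,\varphi) = \mathrm{TSF}(w,\varphi) = 0$ separately, and I can treat the two vanishing conditions one at a time.

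For the invariance term, vanishing of the supremum together with symmetry (A1) (replacing $f$ by $-f$) upgrades the inequality to an equality $\mathbb E_T[f(Z)] = \mathbb E_S[w(Z)f(Z)]$ for every $f \in \mathcal F$. Because $w \geq 0$ and $\mathbb E_S[w(Z)] = 1$, the measure $w(z)p_S(z)$ is a genuine probability distribution on $\mathcal Z$, so this says exactly $\mathrm{IPM}(p_T, w\cdot p_S; \mathcal F) = 0$; assumption (A4) then yields $p_T(z) = w(z)p_S(z)$, i.e. $w(z) = p_T(z)/p_S(z)$.

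For the transferability term, the same symmetry argument gives $\mathbb E_T[Y\cdot \mathbf f(Z)] = \mathbb E_S[w(Z)\, Y\cdot \mathbf f(Z)]$ for all $\mathbf f \in \mathcal F_C$. Conditioning on $Z$ and writing $\mathbf f_S(z) = \mathbb E_S[Y|Z=z]$, $\mathbf f_T(z) = \mathbb E_T[Y|Z=z]$ turns this into $\mathbb E_T[\mathbf f_T(Z)\cdot \mathbf f(Z)] = \mathbb E_S[w(Z)\, \mathbf f_S(Z)\cdot \mathbf f(Z)]$. Here is where the two conditions interact: substituting the already-obtained identity $w = p_T/p_S$ rewrites the right-hand side as a target expectation, leaving $\mathbb E_T[(\mathbf f_T - \mathbf f_S)(Z)\cdot \mathbf f(Z)] = 0$ for every $\mathbf f \in \mathcal F_C$. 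By (A3) both $\mathbf f_S$ and $\mathbf f_T$ belong to $\mathcal F_C$, and by convexity and symmetry (A1) so does $\tfrac12(\mathbf f_T - \mathbf f_S)$; choosing this admissible critic gives $\tfrac12\mathbb E_T[\|\mathbf f_T - \mathbf f_S\|^2] = 0$, hence $\mathbf f_T = \mathbf f_S$ $p_T$-almost surely, which is the claimed $\mathbb E_T[Y|Z=z] = \mathbb E_S[Y|Z=z]$.

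The converse is a direct verification: if $w = p_T/p_S$ then $\mathbb E_S[w(Z)f(Z)] = \mathbb E_T[f(Z)]$ for all $f$, killing $\mathrm{INV}$, and the same change of measure together with $\mathbf f_S = \mathbf f_T$ collapses both expectations in $\mathrm{TSF}$ onto $\mathbb E_T[\mathbf f_T(Z)\cdot \mathbf f(Z)]$, killing $\mathrm{TSF}$. The main obstacle I anticipate is bookkeeping rather than depth: one must feed the invariance conclusion ($w = p_T/p_S$) into the transferability computation to convert the weighted source integral into a target integral, and must invoke (A3) to guarantee that the conditional-mean functions — and hence their half-difference — are legitimate test functions in $\mathcal F_C$ so that the decisive choice $\mathbf f = \tfrac12(\mathbf f_T - \mathbf f_S)$ is allowed.
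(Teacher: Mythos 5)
Your proof is correct and follows essentially the same route as the paper's: vanishing of $\mathrm{INV}$ plus (A4) gives $w\,p_S = p_T$, this identity converts the weighted source expectation in $\mathrm{TSF}$ into a target expectation, and the critic choice $\mathbf f = \tfrac12(\mathbf f_T - \mathbf f_S)$ (admissible by (A1) and (A3)) forces $\mathbb E_T[\|\mathbf f_T - \mathbf f_S\|^2] = 0$. Your explicit remark that both suprema are nonnegative (so the sum vanishing forces each to vanish) is a small bookkeeping step the paper leaves implicit, but otherwise the two arguments coincide.
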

\begin{proof}
First, $\mathrm{INV}(w, \varphi) = 0$ implies $p_T(z) = w(z) p_S(z)$ then which is a direct application of (A4). Now $\mathrm{TSF}(w, \varphi) = \sup_{\mathbf f \in \mathcal F_C} \mathbb E_S[ w(z) \mathbf f_S(Z) \cdot \mathbf f(Z) ]- \mathbb E_T[\mathbf f_T(Z) \cdot \mathbf f(Z) ] =\sup_{\mathbf f \in \mathcal F_C} \mathbb E_S[w(z) \mathbf f_S(Z) \cdot \mathbf f(Z) ]- \mathbb E_S[w(z) \mathbf f_T(Z) \cdot \mathbf f(Z) ] = \sup_{\mathbf f \in \mathcal F_C} \mathbb E_S[(\mathbf f_S - \mathbf f_T )(Z) \cdot \mathbf f (Z)]$. For the particular choice of $\mathbf f = \frac 1 2 (\mathbf f_S - \mathbf f_T)$ leads to $\mathbb E_S[||\mathbf f_S - \mathbf f_T ||^2]$ then $\mathbf f_s = \mathbf f_T$, $p_T$ almost surely. The converse is trivial. $\square$
\end{proof}

\subsection{Proof of bound \ref{IB}} 
\label{proof:IB}

\begin{bound}[Inductive Bias and Guarantee]
Let $\varphi \in \Phi$ and $w: \mathcal Z\to \mathbb R^+$ such that $\mathbb E_S[w(z)]=1$ and a $\beta-$strong inductive classifier $\tilde g$, then:
\begin{align}
    \notag \varepsilon_T(\tilde g\varphi) \leq \frac{\beta}{1- \beta} \left ( \varepsilon_{w\cdot S}(g_{w\cdot S}\varphi)  + 6\cdot\mathrm{INV}(w, \varphi) +  
    2\cdot \widehat{\mathrm{TSF}}(w, \varphi, \tilde g) + \varepsilon_T(\mathbf f_T\varphi) \right) 
\end{align}
\end{bound}

\begin{proof} We prove the bound in the case where $w=1$, the general case is then straightforward. First, we reuse bound \ref{lemma} with a new triangular inequality involving the inductive classifier $\tilde g$:
\begin{equation}
    \varepsilon_T(g\varphi) \leq \varepsilon_S(g\varphi) + d_{\mathcal F_C}(\varphi) + \varepsilon_T(\mathbf f_S\varphi ,\tilde g \varphi) + \varepsilon_T(\tilde g\varphi, \mathbf f_T \varphi) +  \varepsilon_T(\mathbf f_T\varphi)
\end{equation}
where $\varepsilon_T(\tilde g\varphi, \mathbf f_T\varphi) \leq \varepsilon_T(\tilde g\varphi)$. Now, following previous proofs, we can show that:
\begin{equation}
    \varepsilon_T(\mathbf f_S \varphi, \tilde g\varphi) \leq 2\cdot \widehat{\mathrm{TSF}}(\varphi, \tilde g) + 2 \cdot \mathrm{INV}(\varphi)
\end{equation}
Then,
\begin{equation}
    \varepsilon_T(g\varphi) \leq \varepsilon_{S}(g\varphi)  + 6\cdot\mathrm{INV}(\varphi) +  
    2\cdot \widehat{\mathrm{TSF}}(\varphi, \tilde g)  + \varepsilon_T(\tilde g\varphi) + \varepsilon_T(\mathbf f_T\varphi)
\end{equation}
This bound is true for any $g$ and in particular for the best source classifier we have:
\begin{equation}
    \varepsilon_T(g_{S}\varphi) \leq \varepsilon_{ S}(g_{S}\varphi)  + 6\cdot\mathrm{INV}(w, \varphi) +  
    2\cdot \widehat{\mathrm{TSF}}(w, \varphi, \tilde g)  + \varepsilon_T(\tilde g\varphi)  + \varepsilon_T(\mathbf f_T\varphi)
\end{equation}
then the assumption of $\beta-$strong inductive bias is $\varepsilon_T(\tilde g\varphi) \leq \beta \varepsilon_T(g_S\varphi)$ which leads to 
\begin{equation}
    \varepsilon_T(g_{S}\varphi) \leq \varepsilon_{ S}(g_{S}\varphi)  + 6\cdot\mathrm{INV}(w, \varphi) +  
    2\cdot \widehat{\mathrm{TSF}}(w, \varphi, \tilde g)  + \beta \varepsilon_T(g_S\varphi) + \varepsilon_T(\mathbf f_T\varphi)
\end{equation}
Now we have respectively $\varepsilon_T(g_{S}\varphi)$ and $\beta \varepsilon_T(g_S\varphi)$ at left and right of the inequality. Since $1-\beta > 0$, we have:
\begin{equation}
    \varepsilon_T(g_{S}\varphi) \leq \frac{1}{1-\beta} \left (\varepsilon_{ S}(g_{S}\varphi)  + 6\cdot\mathrm{INV}(w, \varphi) +  
    2\cdot \widehat{\mathrm{TSF}}(w, \varphi, \tilde g)  + \varepsilon_T(\mathbf f_T\varphi) \right )
\end{equation}
And finally:
\begin{equation}
    \varepsilon_T(\tilde g \varphi) \leq \beta \varepsilon_T(g_{S}\varphi) \leq \frac{\beta}{1-\beta} \left (\varepsilon_{ S}(g_{S}\varphi)  + 6\cdot\mathrm{INV}(w, \varphi) +  
    2\cdot \widehat{\mathrm{TSF}}(w, \varphi, \tilde g)   + \varepsilon_T(\mathbf f_T\varphi) \right )
\end{equation}
fnishing the proof. $\square$
\end{proof}

\subsection{MinEnt \cite{grandvalet2005semi} is a lower bound of transferability}
\label{proof:MinENT}
\begin{proof} We consider a label smooth classifier $g \in \mathcal G$ \textit{i.e.} there is $0<\alpha<1$ such that:
\begin{equation}
    \frac{\alpha}{C-1} \leq g(z) \leq 1 -\alpha
\end{equation}
and we note $Y = g\varphi(X)$. One can show that:
\begin{equation}
    \log\left (\frac{\alpha}{C-1}\right ) \leq \log (g(z)) \leq \log(1 -\alpha)
\end{equation}
and finally:
\begin{equation}
    1 \geq \frac{1}{\log(\frac{\alpha}{C-1})}\log (g(z)) \geq \frac{1}{\log(\frac{\alpha}{C-1})}\log(1 -\alpha)\geq 0
\end{equation}

We choose as particular $\mathbf f$, $\mathbf f(z) = - \eta \log(g(z)) $ with $\eta =  - \log(\frac{\alpha}{|\mathcal Y|-1})^{-1} >0$. The coefficient $\eta$ ensures that $\mathbf f(z) \in [0,1]$ to make sure $\mathbf f \in \mathcal F_C$. We have the following inequalities: 

\begin{align*}
    \widehat{\mathrm{TSF}}(w,\varphi, g) & \geq \eta \cdot  \left ( \mathbb E_T[ - g(Z)  \cdot \log (g(Z)) ]  -  \mathbb E_{w\cdot S}[ - Y \log (g(Z))]\right ) \\
   & \geq \eta \cdot \left (H_T(\hat Y|Z) - \mathrm{CE}_{w \cdot S}(Y, g(Z))\right )
\end{align*}
\end{proof}
 Interestingly, the cross-entropy is involved. Then, when using $\mathrm{CE}_{w \cdot S}(Y, g(Z))$ as a proxy of $\varepsilon_{w\cdot S}(g\varphi)$, we can observe the following lower bound: 
 \begin{equation}
     \mathrm{CE}_{w \cdot S}(Y, g(Z)) + \widehat{\mathrm{TSF}}(w,\varphi, g) \geq (1 - \eta)  \cdot \mathrm{CE}_{w \cdot S}(Y, g(Z)) + \eta \cdot  H_T(\hat Y|Z)
 \end{equation}
 which is a trade-off between minimizing the cross-entropy in the source domain while maintaining a low entropy in prediction in the target domain. 
 
 \begin{figure}
     \centering
     \includegraphics[scale=0.5]{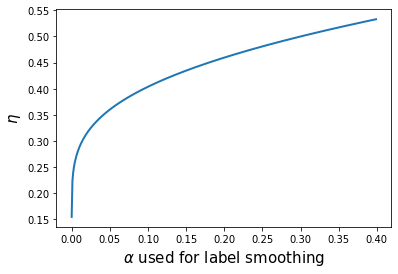}
     \caption{We set $C=31$ which is the number of classes in \textbf{Office31}. Label smoothing $\alpha$ leads naturally to a coefficient $\eta$ which acts as a trade-off between cross-entropy minimization in the source domain and confidence in predictions in the target domain. This result follows a particular choice of the critic function in the transferability error introduced in this paper.}
     \label{fig:alpha_lambda}
 \end{figure}
 
\subsection{Proof of the inductive design of weights}
\label{proof:inductive_weight}
\begin{proposition}[Inductive design of $w$ and invariance] Let $\psi: \mathcal Z \to \mathcal Z'$ such that $\mathcal F \circ \psi \subset \mathcal F$ and $\mathcal F_C\circ \psi\subset \mathcal F_C$. Let $w:\mathcal Z' \to \mathbb R^+$ such that $\mathbb E_S[w(Z')]=1$ and we note $Z':= \psi(Z)$. Then, $\mathrm{INV}(w, \varphi) = \mathrm{TSF}(w,\varphi) = 0$ if and only if:
\begin{equation}
    w(z') = \frac{p_T(z')}{p_S(z')} ~~\mbox{ and } ~~ p_S(z|z') = p_T(z|z')
\end{equation}
while both $\mathbf f_S^\varphi = \mathbf f_T^\varphi$ and $\mathbf f_S^\psi = \mathbf f_T^\psi$.
\end{proposition}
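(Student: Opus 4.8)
The plan is to deduce this refined statement from the already established tightness criterion of Proposition~\ref{tight} (Appendix~\ref{proof:tightness_TV_bound_with_w}), and then to exploit the extra structure coming from the fact that $w$ factors through $\psi$. The key observation is that $w\circ\psi$ is itself an admissible weight on $\mathcal Z$: it is positive and satisfies $\mathbb E_S[(w\circ\psi)(Z)] = \mathbb E_S[w(Z')] = 1$. Hence Proposition~\ref{tight} applies verbatim with $w$ replaced by $w\circ\psi$ and gives, for the forward direction, that $\mathrm{INV}(w,\varphi) = \mathrm{TSF}(w,\varphi) = 0$ is equivalent to the pair of conditions $w(\psi(z))\, p_S(z) = p_T(z)$ and $\mathbf f_S^\varphi = \mathbf f_T^\varphi$. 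It then remains to translate the first of these into the two announced conditions on the marginal $p_D(z')$ and the conditional $p_D(z\mid z')$, and to derive the equality of the coarser labelling functions $\mathbf f^\psi_D$.

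To obtain the marginal condition I would restrict the supremum defining $\mathrm{INV}(w,\varphi)$ to critics of the form $f'\circ\psi$, which is legitimate precisely because $\mathcal F\circ\psi\subset\mathcal F$. This yields
\begin{align}
\mathrm{INV}(w,\varphi) &\geq \sup_{f'\in\mathcal F}\left\{\mathbb E_T[f'(Z')] - \mathbb E_S[w(Z')f'(Z')]\right\} \notag \\
&= \mathrm{IPM}(p_T(z'),\, w(z')\,p_S(z');\,\mathcal F),
\end{align}
so that $\mathrm{INV}(w,\varphi)=0$ forces this IPM between the pushforward measures on $\mathcal Z'$ to vanish; by (A4) this gives $w(z') = p_T(z')/p_S(z')$. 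Substituting back into $w(\psi(z))\,p_S(z)=p_T(z)$ and using the disintegrations $p_D(z) = p_D(z')\,p_D(z\mid z')$, the factors $w(z')p_S(z') = p_T(z')$ cancel and leave $p_S(z\mid z') = p_T(z\mid z')$, which is the second announced condition.

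The equality $\mathbf f_S^\psi = \mathbf f_T^\psi$ is then a consequence rather than an independent hypothesis. Since $Z'=\psi(Z)$ is $Z$-measurable, the tower property gives $\mathbf f_D^\psi(z') = \mathbb E_D[\mathbf f_D^\varphi(Z)\mid Z'=z'] = \int \mathbf f_D^\varphi(z)\, p_D(z\mid z')\,dz$; as both the integrand $\mathbf f_S^\varphi=\mathbf f_T^\varphi$ and the conditional $p_S(z\mid z')=p_T(z\mid z')$ agree across domains, so do the two integrals (the assumption $\mathcal F_C\circ\psi\subset\mathcal F_C$ ensures these coarser labelling functions remain admissible critics). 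For the converse, I would simply recombine: the two conditions $w(z')=p_T(z')/p_S(z')$ and $p_S(z\mid z')=p_T(z\mid z')$ multiply back to $w(\psi(z))\,p_S(z)=p_T(z)$, which together with $\mathbf f_S^\varphi=\mathbf f_T^\varphi$ is exactly the hypothesis of the converse implication in Proposition~\ref{tight}, yielding $\mathrm{INV}(w,\varphi)=\mathrm{TSF}(w,\varphi)=0$. The main delicate point is the measure-theoretic bookkeeping in the disintegration and in the ``almost surely'' qualifiers: one must ensure $p_S$ and $p_T$ are mutually absolutely continuous (guaranteed by positivity and finiteness of $w$) so that an equality holding $p_T$-a.s. also holds $p_S$-a.s., and that (A4) legitimately separates the pushforward measures on $\mathcal Z'$, which is exactly what the hypothesis $\mathcal F\circ\psi\subset\mathcal F$ provides.
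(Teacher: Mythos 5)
Your proposal is correct, but it is organized differently from the paper's own proof, and the difference is worth noting. The paper proves everything from scratch: it derives $w(z')=p_T(z')/p_S(z')$ by restricting the $\mathrm{INV}$ critics to $f\circ\psi$ (as you do), gets $p_S(z|z')=p_T(z|z')$ by the same cancellation of densities, but then establishes $\mathbf f_S^\varphi=\mathbf f_T^\varphi$ by re-running a long chain of supremum manipulations on $\mathrm{TSF}(w,\varphi)$ under the derived conditions, and finally obtains $\mathbf f_S^\psi=\mathbf f_T^\psi$ by a second critic restriction, $\mathbf f\circ\psi$, which is precisely where the hypothesis $\mathcal F_C\circ\psi\subset\mathcal F_C$ is consumed. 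You instead observe that $w\circ\psi$ is itself an admissible weight on $\mathcal Z$ and invoke Proposition \ref{tight} as a black box, which hands you both $w(\psi(z))p_S(z)=p_T(z)$ and $\mathbf f_S^\varphi=\mathbf f_T^\varphi$ in one stroke, and you then get $\mathbf f_S^\psi=\mathbf f_T^\psi$ by the tower property of conditional expectation rather than by an IPM argument. This buys two things: you avoid duplicating the $\mathrm{TSF}$ supremum calculus (the longest and most delicate part of the paper's proof), and your tower-property step shows that $\mathcal F_C\circ\psi\subset\mathcal F_C$ is actually not needed for that conclusion --- it is pure measure theory once $\mathbf f_S^\varphi=\mathbf f_T^\varphi$ and $p_S(z|z')=p_T(z|z')$ are in hand (your parenthetical invoking that hypothesis for "admissibility" is vestigial). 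Two shared caveats: both you and the paper implicitly apply (A4) to pushforward distributions on $\mathcal Z'$ although (A4) is stated for distributions on $\mathcal Z$, and both need mutual absolute continuity to move between $p_T$-a.s.\ and $p_S$-a.s.\ statements; you at least flag both issues explicitly, which the paper does not. Also note that, given your reduction, the critic-restriction step for the marginal condition is dispensable: $w(z')p_S(z')=p_T(z')$ follows from $w(\psi(z))p_S(z)=p_T(z)$ by direct marginalization over $\psi^{-1}(z')$.
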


\begin{proof}
First, 

\begin{align}
    \mathrm{INV}(w,\varphi ) & = \sup_{f\in \mathcal F} \mathbb E_S[w(Z') f(Z)] - \mathbb E_T[f(z)] \\
    & \geq \sup_{f\in \mathcal F} \mathbb E_S[w(Z') f\circ \psi (Z)] - \mathbb E_T[f\circ \psi (z)] \tag{$\mathcal F \circ \psi \subset \mathcal F$}  \\
    & = \sup_{f\in \mathcal F} \mathbb E_S[w(Z') f (Z')] - \mathbb E_T[f (z')] = 0 \tag{$Z' = \psi(Z)$}
\end{align}
which leads to $w(z') p_S(z') = p_S(z')$ which is $w(z') = p_T(z') / p_S(z')$. Second, $\mathrm{INV}(w,\varphi)=0$ also implies that $w(z') p_S(z) = p_T(z)$:
\begin{equation}
    w(z') = \frac{p_T(z)}{p_S(z)} = \frac{p_T(z|z')}{p_S(z|z')} \frac{p_T(z')}{p_S(z')} = \frac{p_T(z|z')}{p_S(z|z')} w(z')
\end{equation}
then $p_T(z|z') = p_S(z|z')$. Finally,
\begin{align}
    \mathrm{TSF}(w,\varphi) &= \sup_{\mathbf f \in \mathcal F_C} \mathbb E_S[w(Z') Y \cdot \mathbf f(Z) ] - \mathbb E_T[Y \cdot \mathbf f(Z) ] \\
    & = \sup_{\mathbf f \in \mathcal F_C} \mathbb E_{Z'\sim p_S}\left [w(Z') \mathbb E_{Z|Z'\sim p_S} [Y \cdot \mathbf f(Z) ]\right] - \mathbb E_{Z' \sim p_T}\left [ \mathbb E_{Z|Z' \sim p_T}[Y \cdot \mathbf f(Z) ]\right] \\
    & = \sup_{\mathbf f \in \mathcal F_C} \mathbb E_{Z'\sim p_S}\left [w(Z') \mathbb E_{Z|Z'\sim p_S} [Y \cdot \mathbf f(Z) ]\right] - \mathbb E_{Z' \sim p_T}w(Z')\left [ \mathbb E_{Z|Z' \sim p_S}[Y \cdot \mathbf f(Z) ]\right] \tag{$w(z') p_S(z') = p_T(z')$}\\
    & = \sup_{\mathbf f \in \mathcal F_C} \mathbb E_{Z'\sim p_S}\left [w(Z') \left ( \mathbb E_{Z|Z'\sim p_S} [Y \cdot \mathbf f(Z) ] -  \mathbb E_{Z|Z'\sim p_T} [Y \cdot \mathbf f(Z) ] \right) \right] \\
    & = \sup_{\mathbf f \in \mathcal F_C} \mathbb E_{Z'\sim p_S}\left [w(Z') \left ( \mathbb E_{Z|Z'\sim p_S} [\mathbf f_S(Z) \cdot \mathbf f(Z) - \mathbf f_T(Z) \cdot \mathbf f(Z)] \right) \right] \tag{$p_S(z|z') = p_T(z|z')$} \\
    & = \sup_{\mathbf f \in \mathcal F_C} \mathbb E_{Z'\sim p_S}\left [w(Z') \left ( \mathbb E_{Z|Z'\sim p_S} [\mathbf f_S(Z) \cdot \mathbf f(Z) - \mathbf f_T(Z) \cdot \mathbf f(Z)] \right) \right] \tag{$p_S(z|z') = p_T(z|z')$} \\
    & \geq 2 \mathbb E_{Z'\sim p_S}\left [w(Z') \left ( \mathbb E_{Z|Z'\sim p_S} [|| \mathbf f_S(Z) - \mathbf f_T(Z)||^2] \right) \right] \\ 
    & \geq 2 \mathbb E_{Z'\sim p_T}\left [ \left ( \mathbb E_{Z|Z'\sim p_T} [|| \mathbf f_S(Z) - \mathbf f_T(Z)||^2] \right) \right] \\
    & \geq 2 \mathbb E_{Z'\sim p_T}\left [ \left ( \mathbb E_{Z|Z'\sim p_T} [|| \mathbf f_S(Z) - \mathbf f_T(Z)||^2] \right) \right]\\
    & \geq 2 \mathbb E_{Z\sim p_T}\left [ || \mathbf f_S(Z) - \mathbf f_T(Z)||^2] \right]
\end{align}
Which leads to $\mathbf f_S(z) = \mathbf f_T(z) $, $p_T(z)$ almost surely, then $\mathbb E_T[Y|Z] = \mathbb E_S[Y|Z]$ for $Z\sim p_T$. Now we finish by observing that: 
\begin{align}
    \mathrm{TSF}(w,\varphi) &= \sup_{\mathbf f \in \mathcal F_C} \mathbb E_S[w(Z') Y \cdot \mathbf f(Z) ] - \mathbb E_T[Y \cdot \mathbf f(Z) ] \\
    & \geq  \sup_{\mathbf f \in \mathcal F_C} \mathbb E_S[w(Z') Y \cdot \mathbf f\circ \psi (Z) ] - \mathbb E_T[Y \cdot \mathbf f\circ \psi (Z) ] \\
    & \geq  \sup_{\mathbf f \in \mathcal F_C} \mathbb E_S[w(Z') Y \cdot \mathbf f(Z') ] - \mathbb E_T[Y \cdot \mathbf f(Z')]
\end{align}
which leads to $\mathbb E_S[Y|Z'] = \mathbb E_T[Y|Z']$ for $Z'\sim p_T$. The converse is trivial. $\square$
\end{proof}

\section{CDAN, DANN and TSF: An open dicussion.}
\label{open_dicussion}
In CDAN \cite{long2018conditional}, authors claims to align conditional $Z | \hat Y$, by exposing the multi-linear mapping of $\hat Y$ by Z, hence its name of Conditional Domain Adversarial Network. Here, we show this claim can be theoretically misleading:

\begin{proposition} If $\mathbb E[\hat Y|Z]$ is conserved across domains, \textit{i.e.} $g$ is conserved, and $\mathcal D$ and $\mathcal D_{\otimes}$ are infinite capacity set of discriminators, this holds:
\begin{equation}
    \mathrm{DANN}(\varphi) = \mathrm{CDAN}(\varphi) 
\end{equation}
\end{proposition}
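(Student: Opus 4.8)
The plan is to reduce both adversarial games to a single divergence between the laws of the representation, and then exploit that the conditioning tensor is a deterministic injective recoding of $Z$. First I would fix the standard adversarial form of each objective. With $\mathcal D$ of infinite capacity, the inner maximization defining $\mathrm{DANN}(\varphi)$, namely $\sup_{d\in\mathcal D}\{\mathbb E_S[\log d(Z)] + \mathbb E_T[\log(1-d(Z))]\}$, is attained at the Bayes-optimal discriminator $d^\star(z) = p_S(z)/(p_S(z)+p_T(z))$ and equals $2\,\mathrm{JSD}(p_S(Z)\,\|\,p_T(Z)) - \log 4$, where $\mathrm{JSD}$ is the Jensen--Shannon divergence. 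Likewise, $\mathrm{CDAN}(\varphi)$ is governed by discriminators in $\mathcal D_\otimes$ acting on the tensor $\hat Y\otimes Z$, so under infinite capacity it equals $2\,\mathrm{JSD}(p_S(\hat Y\otimes Z)\,\|\,p_T(\hat Y\otimes Z)) - \log 4$. It therefore suffices to show the two Jensen--Shannon divergences coincide.

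Second, I would use the hypothesis that $\mathbb E[\hat Y|Z]$ is conserved, i.e. $\hat Y = g(Z)$ for a single classifier $g$ shared by both domains. This makes $\Psi:\ z\mapsto g(z)\otimes z$ a deterministic, domain-independent map whose pushforward of $p_D(Z)$ is exactly $p_D(\hat Y\otimes Z)$ for $D\in\{S,T\}$. The crucial structural fact is that $\Psi$ is injective: writing $\Psi(z) = g(z)\,z^\top$ as a rank-one matrix and summing over the label coordinate, $\mathbf 1^\top \big(g(z)\,z^\top\big) = \big(\textstyle\sum_c g(z)_c\big)\,z^\top = z^\top$, because $g(z)$ is a probability vector with $\sum_c g(z)_c = 1$. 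Hence $z$, and then $g(z)$, is recoverable from $\Psi(z)$, so $\Psi$ is a Borel injection.

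Third, I would invoke the equality case of the data-processing inequality: for an injective measurable map every $f$-divergence, in particular $\mathrm{JSD}$, is invariant under pushforward, since no two distinct atoms of mass are ever merged. Applying this to $\Psi$ gives $\mathrm{JSD}(p_S(\hat Y\otimes Z)\,\|\,p_T(\hat Y\otimes Z)) = \mathrm{JSD}(p_S(Z)\,\|\,p_T(Z))$, and combining with the two reductions above yields $\mathrm{CDAN}(\varphi) = \mathrm{DANN}(\varphi)$. The role of the conservation hypothesis is precisely to guarantee a common map $\Psi$ for both domains; were $\mathbb E_S[\hat Y|Z]\neq\mathbb E_T[\hat Y|Z]$, the two laws would be pushforwards along different maps and the divergence could strictly increase.

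The step I expect to be the main obstacle is the rigorous reduction of the two adversarial suprema to divergences together with the equality case of the data-processing inequality: one must check that the infinite-capacity assumption genuinely lets $\mathcal D$ and $\mathcal D_\otimes$ realize the Bayes-optimal discriminators, and that the injectivity of $\Psi$ indeed forces equality, not merely inequality, in data processing, i.e. that $\Psi$ loses no information distinguishing the two domains. The injectivity computation itself, the summation trick exploiting $\sum_c g(z)_c = 1$, is the clean ingredient that makes the whole argument go through.
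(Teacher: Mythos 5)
Your proof is correct, but it takes a genuinely different route from the paper's. The paper never computes the value of either objective: it runs a two-sided simulation argument directly on the discriminator classes. For $\mathrm{CDAN}(\varphi)\leq\mathrm{DANN}(\varphi)$, any conditional discriminator $d_{\otimes}$ is simulated by the ordinary discriminator $\tilde d: z\mapsto d_{\otimes}(g(z)\otimes z)$, which is admissible by infinite capacity and achieves the same objective value in both domains precisely because $g$ is conserved; for the converse, the paper introduces the block-summation map $T(y\otimes z)=z$ (valid since $\sum_c y_c=1$), so that $d\circ T$ simulates any ordinary discriminator $d$. The two suprema therefore range over the same set of values, term by term. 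You instead evaluate both suprema via the Bayes-optimal discriminator, obtaining $2\,\mathrm{JSD}-\log 4$ on each side, and then transfer the divergence through the injective map $\Psi:z\mapsto g(z)\otimes z$ using the equality case of data processing. The two key ingredients are identical in both arguments — conservation of $g$ yields a single domain-independent map, and the summation trick $\mathbf 1^\top(g(z)z^\top)=z^\top$ yields invertibility — but they are deployed differently. The paper's version is more elementary and loss-agnostic: it works for any objective of the form ``supremum of an expectation functional over a discriminator class,'' with no need for densities, for the Bayes discriminator to be realizable, or for the equality case of the data-processing inequality. Your version is tied to the logistic (GAN-style) loss, but it buys something the paper's does not: it identifies the common optimal value as $2\,\mathrm{JSD}\bigl(p_S(Z)\,\|\,p_T(Z)\bigr)-\log 4$, making explicit that, under conservation of $g$, conditioning on $\hat Y$ adds no discriminative information beyond $Z$ itself. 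One small tightening: rather than appealing to the general principle that injective maps preserve $f$-divergences (which in full generality needs a measurable inverse, e.g.\ via Lusin--Souslin), you can simply apply the data-processing inequality twice, once with $\Psi$ and once with your explicit linear recovery map $v\mapsto \mathbf 1^\top v$; since the latter is continuous and inverts $\Psi$ on its image, the two inequalities sandwich to equality — this is exactly the map $T$ the paper constructs.
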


\begin{proof}
First, let $d_{\otimes} \in \mathcal D_{\otimes}$. Then, for any $(\hat y, z) \sim p_S$ (similarly $\sim p_T$), $d(\hat y \otimes  z) = d(g(z) \otimes z)$ since $\hat y = g(z) = \mathbb E[\hat Y|Z=z]$ is conserved across domains. Then $\tilde d : z\mapsto d_{\otimes}(g(z)\otimes z)$ is a mapping from $\mathcal Z$ to $[0,1]$. Since $\mathcal D$ is the set of infinite capacity discriminators, $\tilde d \in \mathcal D$. This shows $ \mathrm{CDAN}(\varphi) \leq \mathrm{DANN}(\varphi)$. Now we introduce $T: \mathcal Y \otimes \mathcal Z \to \mathcal Z$ such that $T(y\otimes z) = \sum_{1 \leq  c \leq |\mathcal Y|} y_c (y \otimes z)_{c r : (c+1) r} =z$ where $r= \mathrm{dim}(Z)$. The ability to reconstruct $z$ from $\hat y \otimes z$ results from $\sum_c y_c = 1$. This shows that $\mathcal D_{\otimes} \circ T = \mathcal D$ and finally $\mathrm{CDAN}(\varphi) \geq \mathrm{DANN}(\varphi)$ finishing the proof.
\end{proof}

This proposition follows two key assumptions. The first is to assume that we are in context of infinite capacity discriminators of both $\mathcal Z$ and $\mathcal Y \otimes \mathcal Z$. This assumption seems reasonable in practice since discriminators are multi-layer perceptrons. The second is to assume that $\mathbb E[\hat Y |Z]$ is conserved across domains. Pragmatically, the same classifier is used in both source and target domains which is verified in practice. Despite the empirical success of CDAN, there is no theoretical evidence of the superiority of CDAN with respect to DANN for UDA. However, our discussion on the role of inductive design of classifiers is an attempt to explain the empirical superiority of such strategies.

\section{More training details}

\subsection{From IPM to Domain Adversarial Objective}
\label{tv_to_da}
While our analysis holds for IPM, we recall the connections with  $\mathsf f-$divergence, where domain adversarial loss is a particular instance, for comparing distributions. This connection is motivated by the furnished literature on adversarial learning, based on domain discriminator, for UDA. This section is then an informal attempt to transport our theoretical analysis, which holds for IPM, to $\mathsf f-$divergence. Given $\mathsf f$ a function defined on $\mathbb R^+$, continuous and convex, the $\mathsf f-$divergence between two distributions $p$ and $q$: $\mathbb E_p[\mathsf f(p/q)]$, is null if and only if $p=q$. Interestingly, $\mathsf f-$divergence admits a 'IPM style' expression $\mathbb E_p[\mathsf f(p/q)] = \sup_f \mathbb E_p[f] - \mathbb E_q[\mathsf f^\star (f)]$ where $\mathsf f^\star$ is the convex conjugate of $\mathsf f$. It is worth noting it is not a IPM expression since the critic is composed by $\mathsf f^\star$ in the right expectation. The domain adversarial loss \cite{ganin2015unsupervised} is a particular instance of $\mathsf f-$divergence (see \cite{bottou2018geometrical} for a complete description in the context of generative modelling). Then, we informally transports our analysis on IPM distance to domain adversarial loss. More precisely, we define:
\begin{align}
  \mathrm{INV}_{\mathrm{adv}}(w,\varphi) &:= \log(2) - \sup_{d \in\mathcal D} \mathbb E_S[ w(Z) \log( d(Z) ) ] + \mathbb E_T[ \log(1-d(Z))] \\
   \mathrm{TSF}_{\mathrm{adv}}(w,\varphi) & := \log(2) - \sup_{\mathbf d \in \mathcal D_{\mathcal Y}} \mathbb E_S[ w(Z) Y \cdot  \log( \mathbf d (Z) ) ] + \mathbb E_T[ Y \cdot \log(1-\mathbf d(Z))]
\end{align}
where $\mathcal D$  is the well-established domain discriminator from $\mathcal Z$ to $[0,1]$, and $\mathcal D_{\mathcal Y}$ is the set of \textit{label domain discriminator} from $\mathcal Z$ to $ [0, 1]^C$.

\subsection{Controlling invariance error with relaxed weights}
\label{training_details:relaxed_weights}
In this section, we show that even if representations are not learned in order to achieve domain invariance, the design of weights allows to control the invariance error during learning. More precisely $w^\star(\varphi) = \arg \min_w \mathrm{INV}(w, \varphi)$ has a closed form when given a  domain discriminator $d$ \textit{i.e.} the following function from the representation space $\mathcal Z$ to $[0,1]$: 
\begin{equation}
    d(z) := \frac{p_S(z)}{p_S(z) + p_T(z)}
\end{equation}
Here, setting $w^\star(z) := (1-d(z)) / d(z) = p_T(z) / p_S(z)$ leads to $w(z) p_S(z) = p_T(z)$ and finally $\mathrm{INV}(w^\star(\varphi), \varphi)= 0$. At early stage of learning, the domain discriminator $d$ has a weak predictive power to discriminate domains. Using exactly the closed form $w^\star(z)$ may degrade the estimation of the transferability error. Then, we suggest to build relaxed weights $\tilde w_d$ which are pushed to $w^\star$ during training. This is done using temperature relaxation in the sigmoid output of the domain discriminator: 
\begin{equation}
    w_d^\tau(z) := \frac{1 -\sigma \left (\tilde d(z) /\tau  \right )}{\sigma\left (\tilde d(z) /\tau \right)}
\end{equation}
where $d(z) = \sigma(\tilde d(z))$; when $\tau \to 1$, $w_d(z, \tau) \to w^\star(z)$.

\subsection{Ablation study of the weight relaxation parameter $\alpha$}
\label{ablation_alpha}
$\alpha$ is the rate of convergence of relaxed weights to optimal weights. We investigate its role on the task U$\to$M. Increasing $\alpha$ degrades adaptation, excepts in the harder case ($5\% \times [0\sim 5]$). Weighting early during training degrades representations alignment. Conversely, in the case $5\% \times [0\sim 5]$, weights need to be introduced early to not learn a wrong alignment. In practice $\alpha=5$ works well (except for $5\%\times[0\sim 5]$ in \textbf{Digits}).
\begin{figure}[H]
\centering
\includegraphics[width=0.5\textwidth]{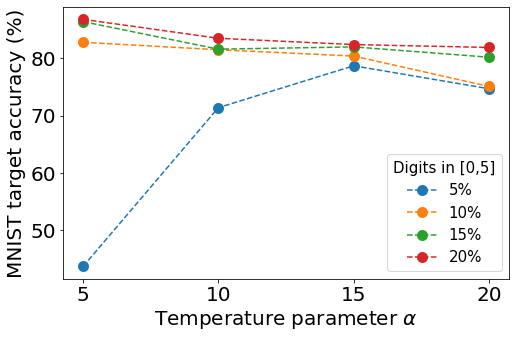}
\caption{\label{fig:blue_rectangle} \small{Effect of $\alpha.$}}
\end{figure}

\subsection{Additional results on Office-Home dataset}
\begin{table}[h!]
\small{
\centering 
\caption{Accuracy ($\%$) on \textbf{Office-Home} based on ReseNet-50. }\label{standard}
\begin{tabular}{|c|cccccccccccc||c|}
\hline
\scriptsize{Method} & \scriptsize{Ar{\tiny{$\to$}}Cl} & \scriptsize{Ar{\tiny{$\to$}}Pr} & \scriptsize{Ar{\tiny{$\to$}}Rw} &\scriptsize{Cl{\tiny{$\to$}}Ar}  & \scriptsize{Cl{\tiny{$\to$}}Pr} & \scriptsize{Cl{\tiny{$\to$}}Rw} & \scriptsize{Pr{\tiny{$\to$}}Ar} & \scriptsize{Pr{\tiny{$\to$}}Cl} & \scriptsize{Pr{\tiny{$\to$}}Rw} & \scriptsize{Rw{\tiny{$\to$}}Ar} & \scriptsize{Rw{\tiny{$\to$}}Cl} & \scriptsize{Rw{\tiny{$\to$}}Pr} &\scriptsize{Avg}  \\
\hline
\scriptsize{ResNet50} & 34.9 & 50.0 & 58.0 & 37.4 & 41.9 & 46.2 & 38.5 & 31.2 & 60.4 & 53.9 & 41.2 & 59.9 & 46.1   \\ \hline 
\scriptsize{DANN} & 45.6 & 59.3 & 70.1 & 47.0 & 58.5 & 60.9 & 46.1 & 43.7 & 68.5 & 63.2 & 51.8 & 76.8 & 57.6 \\
\scriptsize{CDAN} & 49.0 & 69.3 & 74.5 & 54.4 & 66.0 & 68.4 & 55.6 & 48.3 & 75.9 & 68.4& 55.4 & 80.5 & 63.8 \\
\scriptsize{CDAN+E} & 50.7 & \textbf{70.6} & \textbf{76.0} & \textbf{57.6} & \textbf{70.0} & \textbf{70.0} & \textbf{57.4} & \textbf{50.9} & \textbf{77.3} & \textbf{70.9} & 56.7 & \textbf{81.6} & \textbf{65.8} \\
\scriptsize{RUDA} & \textbf{52.0} & 67.1 & 74.4 & 56.8 & 69.5 & 69.8 & 57.3 & \textbf{50.9} & 77.2 & 70.5 & \textbf{57.1} & 81.2 & 64.9\\ 
\hline 

\end{tabular}}
\end{table}

\subsection{Detailed procedure}
The code is available at \url{https://github.com/vbouvier/ruda}.
\label{procedure_detailed}
\begin{algorithm}[h!]
\caption{Procedure for Robust Unsupervised Domain Adaptation}
\label{alg:adversarial_gamma}
\textbf{Input}: Source samples $(x_{S,i}, y_{S,i})_i$, Target samples $(x_{T,i}, y_{T,i})_i$, $(\tau_t)_t$ such that $\tau_t \to 1$, learning rates $(\eta_t)_t$, trade-off $(\alpha_t)_t$ such that $\alpha_t \to 1$, batch-size $b$
\begin{algorithmic}[1] 
\STATE $\theta_g, \theta_\varphi, \theta_d, \theta_{\mathbf d}$ random initialization.
\STATE $t \leftarrow 0$
\WHILE{stopping criterion}
\STATE $\mathcal B_S \sim (x_i^s)$, $\mathcal B_T \sim (x_j^t)$ of size $b$.
\STATE $\theta_d \leftarrow \theta_d - \eta_t \nabla_{\theta_d} \mathcal L_{\mathrm{INV}}(\theta_d |\theta_\varphi ; \mathcal B_S, \mathcal B_T)$
\STATE $\theta_{\mathbf d} \leftarrow \theta_{\mathbf d} - \eta_t \nabla_{\theta_{\mathbf d}} \mathcal L_{\mathrm{TSF}}(\theta_g, \theta_\varphi, \theta_{\mathbf d}| \theta_d, \tau_t)$
\STATE $\theta_\varphi\leftarrow \theta_\varphi - \eta_t \nabla_{\theta_{\varphi}}\left (\mathcal L_c(\theta_g, \theta_\varphi | \theta_d, \tau_t) -  \alpha_t \mathcal L_{\mathrm{TSF}}(\theta_\varphi, \theta_{\mathbf d}|\theta_g,  \theta_d, \tau_t) \right)$
\STATE $\theta_g \leftarrow \theta_g - \eta_t \nabla_{\theta_g} \mathcal L_c(\theta_g, \theta_\varphi|\theta_d, \tau_t)$
\STATE $t \leftarrow t + 1$
\ENDWHILE
\end{algorithmic}
\end{algorithm}

\end{document}